\documentclass{article} 
\usepackage{iclr2026_conference,times}

\usepackage{amsmath,amsfonts,bm}

\def\eqref#1{equation~\ref{#1}}

\def\1{\bm{1}}

\DeclareMathAlphabet{\mathsfit}{\encodingdefault}{\sfdefault}{m}{sl}
\SetMathAlphabet{\mathsfit}{bold}{\encodingdefault}{\sfdefault}{bx}{n}

\newcommand{\softmax}{\mathrm{softmax}}

\usepackage{hyperref}
\usepackage{url}

\hypersetup{
  colorlinks=true,
  linkcolor=orange!50!brown,
  citecolor=blue,
  urlcolor=magenta,
  filecolor=cyan,
}

\usepackage{float}
\usepackage{algorithm}
\usepackage{algpseudocode}
\usepackage{amsmath}
\usepackage{amsthm}
\usepackage{amssymb}

\newtheorem{theorem}{Theorem}

\usepackage{mathtools}
\usepackage{comment}
\usepackage{booktabs}
\usepackage{graphicx}
\usepackage{tcolorbox}
\usepackage{enumitem} 
\usepackage{subcaption}

\title{Stopping Computation for Converged Tokens in Masked Diffusion-LM Decoding}

\author{Daisuke Oba\textsuperscript{1}\thanks{Corresponding author. Email: \texttt{daisuke.oba@nlp.comp.isct.ac.jp}} \quad Danushka Bollegala\textsuperscript{2,3}\quad  Masahiro Kaneko\textsuperscript{4}\quad  Naoaki Okazaki\textsuperscript{1} \\
\textsuperscript{1}Institute of Science Tokyo\quad \textsuperscript{2}University of Liverpool \quad  \textsuperscript{3}Amazon \quad  \textsuperscript{4}MBZUAI\\
}

\iclrfinalcopy 
\begin{document}

\maketitle

\begin{abstract}
Masked Diffusion Language Models generate sequences via iterative sampling that progressively unmasks tokens. However, they still recompute the attention and feed-forward blocks for every token position at every step---even when many unmasked tokens are essentially fixed, resulting in substantial waste in compute.
We propose \textbf{\textsc{SureLock}}: when the posterior at an unmasked position has stabilized across steps (our \emph{sure} condition), we \emph{lock} that position---thereafter skipping its query projection and feed-forward sublayers---while caching its attention keys and values so other positions can continue to attend to it.
This reduces the dominant per-iteration computational cost from $O(N^2d)$ to $O(MNd)$ where $N$ is the sequence length, $M$ is the number of unlocked token positions, and $d$ is the model dimension.
In practice, $M$ decreases as the iteration progresses, yielding substantial savings.
On LLaDA-8B, SureLock reduces algorithmic FLOPs by 30--50\% relative to the same sampler without locking, 
while maintaining comparable generation quality.
We also provide a theoretical analysis to justify the design rationale of SureLock:  monitoring only the local KL at the lock step suffices to bound the deviation in final token probabilities. 
Our project page is available at \url{https://daioba.github.io/surelock}.
\end{abstract}

\section{Introduction}
\label{sec:intro}
Discrete diffusion language models (DLMs) generate text by iteratively denoising a discrete sequence over $T$ steps~\citep{li2022diffusion,schiff2024simple}; 
While its formulation varies (e.g., token swap, insertion, or masking), a widely used family operates through \emph{masking and unmasking}---Masked Diffusion Language Models (MDLMs)~\citep{sahoo2024simple,shi2024simplified,nie2025largelanguagediffusionmodels,arriolablock}.
Unlike autoregressive (AR) decoding~\citep{vaswani2017attention}---whose per-step compute naturally grows by one query token via a $KV$–cache---standard diffusion-style sampling repeatedly recomputes self-attention and per-token feed-forward sublayers for \emph{all} $N$ token positions\footnote{In this paper, we use \emph{position} to refer to the a fixed token slot $i$ in the length-$N$ sequence; a \emph{position} exists even while its token remains masked.} at \emph{every} step---even for tokens that have already been unmasked and considered to be stable.
Hence, the per-block cost is dominated by computing the attention scores $QK^\top$ and applying them to $V$, i.e., $O(N^2 d)$ where $d$ is the model dimension. It leads to substantial waste in compute.

To address the computational challenges of DLM sampling, prior work has mainly progressed along two axes:
{\emph{Temporal}} approaches shrink the step count—e.g., parallel/dilated unmasking and staged or learned samplers—thereby requiring fewer refinement~\citep{luxembourg2025plan,israel2025accelerating,wei2025accelerating};
{\emph{Reuse}} approaches reduce per-step compute by reusing or approximating $K/V$ vectors and partially updating hidden features across steps~\citep{ma2025dkv,liu2025dllm,wu2025fast}.
These choices chiefly either reduce the step count~$T$ or amortize work across steps by reusing intermediate states; they do not alter the within-step spatial granularity: each step still issues $N$ query rows, so the attention-dominated cost remains $O(N^2 d)$ even in late iterations.

\begin{figure}[t]
    \centering
    \includegraphics[width=0.9\linewidth]{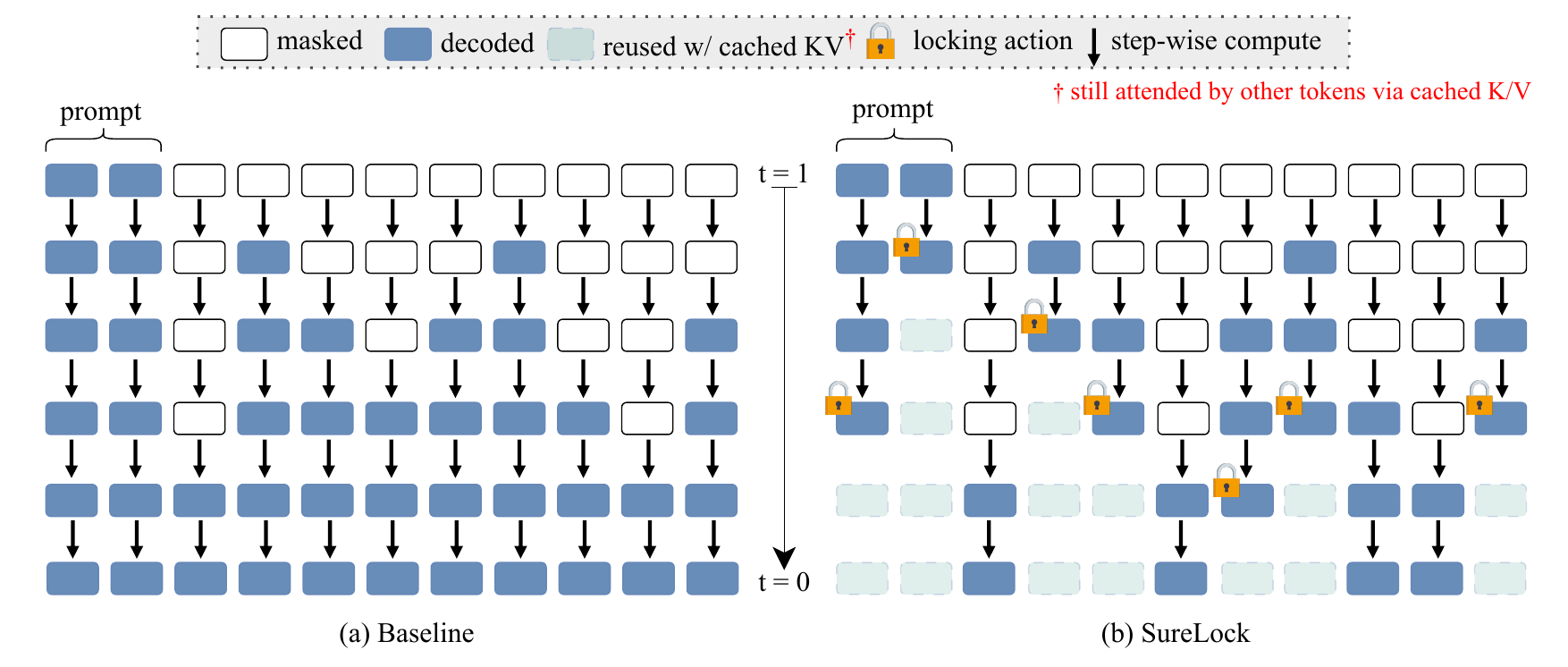}
    \caption{\textbf{Iterative sampling by a normal sampler and SureLock.} (\textit{a}) Baseline consistently recomputes attention scores and FFN sublayers for every token position at every step even after the marginal tokens have become unmasked. (\textit{b}) SureLock permanently stops recomputing for \emph{locked} positions once these positions are locked. Via cached $K/V$, other tokens still attend to locked tokens.}    
    \label{fig:image}
\end{figure}

Beyond reducing the step count $T$ or reusing $K/V$ across steps, we pursue a largely orthogonal axis: \emph{permanently and monotonically deactivating token positions as the sampling unfolds}.
We propose \textbf{\textsc{SureLock}}: once a token's posterior has stabilized, we \emph{lock} that position---we cache $K/V$ vectors and therefore \emph{skip} its $Q$-projection and per-token feed-forward sublayers. 
Active token positions can still attend to the locked ones via the cached $K/V$~(Sec.~\ref{sec:methods}). 
The per-block cost becomes $O(M_t N d)$ for attention and $O(M_t d^2)$ for FFN, instead of $O(N^2 d)$ and $O(Nd^2)$, yielding a \emph{monotonically decreasing} per-step compute profile as $M_t$ shrinks. 
We lock token positions based on their \emph{\textbf{local stability}}; a criterion whether the per-step KL divergence of the generative probability distribution falls below a threshold $\varepsilon$. 
In order to justify using KL as the locking signal, we derive a closed-form bound, which links a per-step KL at locking time to the terminal log-prob deviation\footnote{It is deliberately conservative and is intended as a \emph{design rationale}, not as a calibrated predictor for some specific models and datasets}. 
This axis is \emph{complementary} to the \emph{Temporal} and \emph{Reuse} approaches; existing methods continue to operate on the remaining active token positions.

Our work is most closely related to selection-based methods (e.g.,~\textsc{dLLM-Cache}~\citep{liu2025dllm}) lower per-step compute by updating only a subset of positions.
\textsc{SureLock}, by contrast, answers a different question—\emph{\textbf{what to remove from compute permanently}}, instead of \emph{what to compute now}—so the active position set contracts over time; their selection target of step-wise compute can be tapered down monotonically.

We evaluate \textsc{SureLock} on representative MDLMs---{LLaDA}-8B-{Base}/-{Instruct}~\citep{nie2025largelanguagediffusionmodels}---in Sec.~\ref{sec:exp}.
Across diverse decoding settings (e.g., sequence length, locking threshold), per-step algorithmic FLOPs are monotonically decreasing.
On continuation generation with WikiText-103~\citep{merity2016pointer} and instruction following with MT-Bench~\citep{zheng2023judging}, \textsc{SureLock} reduces algorithmic FLOPs up to \textbf{50\%} in our runs without an expense of generation quality.

\section{\textsc{SureLock}}\label{sec:methods}

\begin{algorithm}[t]
\caption{SureLock}
\label{alg:surelock}
\begin{algorithmic}[1]
\Require sequence length $N$, total steps $T$, confidence threshold percentile $m$, KL threshold $\varepsilon$
\Require vocabulary set $\mathcal{V}$, number of layers $L$, an unmasking policy $\mathsf{UpdateMASK}(\cdot)$
\State \textbf{State:} boolean $\mathrm{lock}\in\{0,1\}^{N}$ (init: all $0$), caches $\mathcal C$ for $(k,v)$ (init: \texttt{None})
\State \textbf{State:} previous posteriors $p_{t-1}$ (init: \texttt{None}), masked indices $\mathcal{M}_t$ (init: $[N]$)
\State \textbf{State:} frozen block-input $\hat{x}\in\mathbb{R}^{N\times d}$ \text{ (init: prompt embeddings)}
\State \textbf{Notation:} $X_t \in \mathbb{R}^{N\times d}$ denotes the model input at $t$ (from embeddings or previous step output)
\For{$t=1$ \textbf{to} $T$} \Comment{one diffusion step}
  \State $\mathcal A_t \gets \{ i \mid \mathrm{lock}_i = 0 \}$,\quad$\mathcal L_t \gets \{ i \mid \mathrm{lock}_i = 1 \}$ \Comment{active and locked position at $t$}
  \State \textbf{Initialize block input} $x^{(1)}[\mathcal A_t] \gets X_t[\mathcal A_t]$, $x^{(1)}[\mathcal L_t] \gets \hat{x}[\mathcal L_t]$
  \State \textbf{Compute on active positions} $i\in \mathcal{A}_t$:
  \State \hspace{1.5em} \textbf{for} $\ell=1$ \textbf{to} $L$ 
      \State \hspace{2.5em} $Q[\mathcal A_t],K[\mathcal A_t],V[\mathcal A_t] \gets \mathrm{Proj}_{Q,K,V}(\mathrm{LN}(x^{(l)}[\mathcal A_t]))$ \Comment{main projections}
      \State \hspace{2.5em}
             $K^\mathrm{all}[\mathcal A_t]\!\gets\!K[\mathcal A_t],\ 
              K^\mathrm{all}[\mathcal L_t]\!\gets\!\mathcal C.k[\mathcal L_t]$; \; \textbf{same for} $V^\mathrm{all}$ \Comment{assemble}
      \State \hspace{2.5em} $h^{(l)}[\mathcal A_t] \gets \mathrm{FFN}(\mathrm{Attn}(Q[\mathcal A_t],K^\mathrm{all},V^\mathrm{all}))$ \Comment{attention with FFN}
      \State \hspace{2.5em} $x^{(l+1)}[\mathcal A_t]\gets h^{(l)}[\mathcal A_t],\quad
             x^{(l+1)}[\mathcal L_t]\gets x^{(l)}[\mathcal L_t]$ \Comment{locked rows pass through}
  \State \hspace{1.5em} \textbf{end for} 
  \State \textbf{Obtain posteriors} $p_t[\mathcal{A}_t]\!\gets\mathrm{softmax}(\mathrm{Proj}_{out}(x^{(L+1)}[\mathcal{A}_t]))$

  \State \textbf{Unmasking:} $\mathcal{M}_t\!\gets\mathsf{UpdateMASK}(p_t, \mathcal{M}_t)$, \quad $\bar{\mathcal{M}_t}\!\gets\![N]\setminus\mathcal{M}_t$ \Comment{unmask with posterior}
  \State \textbf{Score on active positions} $i\in \mathcal{A}_t$:
  \State \hspace{1em} $u_t^{(i)} \gets 1-\max_{v\in{\mathcal{V}}} p_t^{(i)}(v)$ for $i\in\mathcal A_t$ \Comment{confidence value}
  \State \hspace{1em} $D_t^{(i)} \gets \mathrm{KL}(p_t^{(i)}\|p_{t-1}^{(i)}) \;\;\textbf{if}\;\;{t>1}\;\;\textbf{else}\;\infty$ \Comment{step-wise KL}
  \State \textbf{Locking candidates}: $\mathcal Z_t \gets$ $\mathcal{A}_{t}\;\;\cap\;\;\bar{\mathcal{M}_t}$ \Comment{must be active \& unmasked}
  \State \hspace{1em} $\theta_m \gets \mathrm{Percentile}\big(\{u_t^{(j)}\}_{j\in\mathcal{Z}_t},\, m\big)$ \Comment{threshold over candidate positions}
  \State \textbf{Lock}:
  \State \hspace{1em} $\mathcal F_t \gets \{\, i\in\mathcal Z_t \mid u_t^{(i)} \le \theta_m \ \wedge\  D_t^{(i)} \le \varepsilon \,\}$ \Comment{locking evaluation}
  \State \hspace{1em} $\mathrm{lock}[\mathcal F_t]\gets 1;\ \ \mathcal C_\ell.\{K,V\}[\mathcal F_t]\gets\{K_\ell,V_\ell\}[\mathcal F_t]\ \ \forall \ell$ \Comment{update locked indices and cache}
  \State \hspace{1em} $\hat{x}[\mathcal F_t] \gets x^{(1)}[\mathcal F_t]$ \Comment{freeze block-input for future steps}

\EndFor
\end{algorithmic}
\end{algorithm}

We consider MDLMs that iteratively denoise a length-$N$ sequence,
producing intermediate sequences over steps $t=1,\dots,T$.
At step $t$, an $L$-block Transformer yields token-wise logits $z_t^{(i)}$ and posteriors
$p_t^{(i)}=\mathrm{softmax}(z_t^{(i)})$ at token position $i$,
together with hidden states $h_t^{(i)}\in\mathbb{R}^d$.
Let $\mathcal{M}_t\subseteq [N]$ denote the masked positions (prediction targets) and
$\bar{\mathcal{M}}_t=[N]\setminus\mathcal{M}_t$ the unmasked ones.
In the standard dLLM sampler, each step recomputes self-attention and FFN for \emph{all} tokens. 

In the following, we explain the proposed method \textbf{\textsc{SureLock}} (Algorithm~\ref{alg:surelock}).
Once a token's posterior has stabilized at a step, it locks that token position to stop step-wise computations at all subsequent steps by bypassing sublayers and caching K/V values~(Sec.~\ref{sec:methods:locking}). 
Our locking criterion is based on the step-wise KL divergence of the posterior~(Sec.~\ref{sec:methods:lock-criterion}).
We explain the rationale of using the KL as locking criterion from a theoretical perspective~(Sec.~\ref{sec:methods:error}).

\subsection{Permanently Stopping Step-wise Compute and Caching K/V}
\label{sec:methods:locking}
Once a token $i$ is deemed \emph{converged} at step $t^\star$, we 
\emph{permanently eliminate} its position from per-step compute: we \textbf{cache} its $K/V$ vectors, and \textbf{bypass} its query projection and per-token FFN at all subsequent steps. 
Let $\mathcal{L}_t$ be the set of indices locked by step $t$, and define the \emph{active} set
$\mathcal{A}_t \coloneq \bar{\mathcal{M}}_t \setminus \mathcal{L}_t$ (unmasked and not yet locked).
At step $t$ we assemble the queries $Q[\mathcal{A}_t]$ and run a \emph{variable-length} attention
kernel against the full key/value tables
$K^{\mathrm{all}}$ and $V^{\mathrm{all}}$
where $K^{\mathrm{all}}[\mathcal{L}_t],V^{\mathrm{all}}[\mathcal{L}_t]$ are read from cache.
This yields attention outputs only for active positions $\in \mathcal{A}_t$, and we apply the FFN sublayers only to them.
Moreover, for locked indices $i\in\mathcal{L}_t$ we keep their predictions fixed, i.e.,
$p_t^{(i)} \!\leftarrow\! p_{t^\star}^{(i)}$, and skip their FFN computation.
Note that locked positions continue to be attended by other tokens via cached $K/V$.

\subsection{Criterion for Locking: Step-wise KL Divergence}
\label{sec:methods:lock-criterion}

Our locking rule is driven primarily by \emph{local KL};
we will justify the validity of this design in Theorem~\ref{thm:lock} in Sec.~\ref{sec:methods:error}.
We also apply a \emph{confidence} gating as a secondary safeguard to prefer more confident tokens with peaked posteriors. 
Disabling the confidence gate leaves the theorem unchanged.

\paragraph{Primary: Local KL divergence.}
For position $i$ at step $t$, we define the one-step divergence as
\[
D^{(i)}_t \;\triangleq\; \mathrm{KL}\!\left(p_t^{(i)} \,\|\, p_{t-1}^{(i)}\right).
\]

\paragraph{Optional: Confidence gate.}
Let uncertainty $u_t^{(i)} \!=\! 1-\max_{v\in{\mathcal{V}}} p_t^{(i)}(v)$ and $q_m(u_t)$ the empirical $m$-th percentile of $\{\,u_t^{(j)}: j\in \mathcal{A}_t\,\}$.
When enabled, the gate accepts token position $i$ as a locking candidate, if $u_t^{(i)} \le q_m(u_t)$, i.e., top-$m\%$ confidence among active tokens.

\paragraph{Locking rule.}
We lock token position $i$ at step $t^\star$, if
\[
D^{(i)}_{t^\star} \le \varepsilon 
\quad \text{and, if the confidence gate is enabled,} \quad
u_{t^\star}^{(i)} \le q_{m}(u_{t^\star}).
\]
The primary criterion alone suffices Theorem~\ref{thm:lock}.
Upon locking we cache $(k_{t^\star}^{(i)}, v_{t^\star}^{(i)})$ 
and remove position $i$ from $\mathcal{A}_{>t}$ thereafter.
Unless otherwise noted, the criterion is based on the raw posterior before any temperature scaling. Therefore, SureLock is independent of sampling temperature in the categorical sampling (See appendix~\ref{appendix:temperature} for discussion on this choice).

\subsection{Design Justification}
\label{sec:methods:error}
The purpose of this section is \emph{design-theoretic}: we justify the use of a local KL threshold as a locking (freezing) criterion by proving that it upper-bounds the terminal error of the log-probability in closed form. 
The controller based on \textsc{SureLock} locks position $i$ at step $t^\star$, bypassing Q-projection and FNN sublayers, and reusing its cached $K/V$ thereafter (Alg.~\ref{alg:surelock}). 
We establish that the resulting error of the terminal log-probabilities, relative to an alternative identical sampler without locking, is bounded by $\delta \;=\; C_{\mathrm{tail}}\sqrt{\varepsilon}$ under some assumptions, where $C_{\mathrm{tail}}$ depends on operator-norm constants of the model and $\varepsilon$ is the threshold for determining lock (i.e., $D_{t^\star}^{(i)}\!\le\!\varepsilon$).
Therefore, the local KL criterion immediately converts to an explicit design for the terminal error ($\varepsilon(\delta)=\delta^2/C_{\mathrm{tail}}^2$).
In the following, $p_t^{(i)}$ and $z_t^{(i)}$ denote the posterior and logits at position $i$ after step $t$ in the no-lock run, while $\hat p_t^{(i)}$ denotes the corresponding posterior when $i$ is locked at $t^\star$.

\emph{Note.} We emphasize that we here justify the use of a local KL threshold as a locking criterion by proving that it upper-bounds the terminal error in closed form, rather than to predict an empirical error value for a given specific setting.

\paragraph{Standing assumptions for Theorem~\ref{thm:lock}.}
{Theorem 1 relies on four main assumptions. We emphasize that these are simplifying regularity conditions introduced for analytical tractability, and that Theorem 1 should be interpreted as an idealized model of the behavior we observe empirically. For Assumption A2, which may appear rather strong, Appendix~\ref{appendix:assumption:A2} shows that its implications do not deviate substantially from the practical situations.}
Let $z$ the logit and $f(z)=\log\softmax(z)$. 
Fix constants $L>0$, $L_{\mathrm{sm}}>0$, and $\rho\in(0,1)$.
For any position $i$ and step $s$, the following hold:
\begin{itemize}
    \item[\textbf{(A1)}] \textbf{Locking semantics.} Once $i$ is locked at $t^\star$, it is excluded from subsequent re-masking.
    \item[\textbf{(A2)}] \textbf{Geometric tail contraction.} $D_s^{(i)} \le \rho\,D_{s-1}^{(i)}$ for $s>t^\star$.
    \item[\textbf{(A3)}] \textbf{One-step logit smoothness.} $\|z_s^{(i)}-z_{s-1}^{(i)}\|_2 \le L\,\sqrt{D_{s-1}^{(i)}}$ \;(\emph{derivation in Appendix~\ref{appendix:assumption:A3}}).
    \item[\textbf{(A4)}] \textbf{Log-softmax Lipschitzness.} $\|f(z)-f(z')\|_\infty \le L_{\mathrm{sm}}\|z-z'\|_2$.
\end{itemize}
\quad

\begin{theorem}[Locking error bound and closed-form threshold]
\label{thm:lock}
Fix a position $i$ that is unlocked up to $t^\star$ and then locked (Alg.~\ref{alg:surelock}). Under (A1)--(A4), for any terminal time $T>t^\star$,
\[
\bigl\|\log p_T^{(i)}-\log \hat p_T^{(i)}\bigr\|_\infty
\;\le\;
C_{\mathrm{tail}}\sqrt{D_{t^\star}^{(i)}},
\qquad
C_{\mathrm{tail}} \;\coloneq\; {L_{\mathrm{sm}}\,L}/({\,1-\sqrt{\rho}\,}).
\]
In particular, if the locking test enforces $D_{t^\star}^{(i)}\le \varepsilon$, then the terminal log-probability error is at most
$\delta = C_{\mathrm{tail}}\sqrt{\varepsilon}$,
so the closed-form threshold is
\[
\varepsilon(\delta)={\delta^2}/{C_{\mathrm{tail}}^2}.
\]
\end{theorem}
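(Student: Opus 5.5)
The plan is to compare the no-lock trajectory with the locked one through the log-softmax map. After locking, the locked run keeps the prediction fixed: $\hat p_T^{(i)} = p_{t^\star}^{(i)}$, so $\hat z_T^{(i)} = z_{t^\star}^{(i)}$ up to an additive constant that log-softmax ignores. (A1) guarantees that position $i$ is never re-masked, so this frozen value really is what the locked sampler reports at time $T$. The whole error therefore reduces to the drift of the no-lock logits between $t^\star$ and $T$. By (A4),
\[
\bigl\|\log p_T^{(i)}-\log \hat p_T^{(i)}\bigr\|_\infty=\bigl\|f(z_T^{(i)})-f(z_{t^\star}^{(i)})\bigr\|_\infty\le L_{\mathrm{sm}}\,\bigl\|z_T^{(i)}-z_{t^\star}^{(i)}\bigr\|_2 .
\]

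Next, I telescope the logit drift and apply (A3) to each increment:
\[
\bigl\|z_T^{(i)}-z_{t^\star}^{(i)}\bigr\|_2\le\sum_{s=t^\star+1}^{T}\bigl\|z_s^{(i)}-z_{s-1}^{(i)}\bigr\|_2\le L\sum_{s=t^\star+1}^{T}\sqrt{D_{s-1}^{(i)}} .
\]
Iterating (A2) gives $D_{t^\star+k}^{(i)}\le\rho^{k}D_{t^\star}^{(i)}$ for $k\ge 0$. Hence $\sqrt{D_{s-1}^{(i)}}\le(\sqrt\rho)^{\,s-1-t^\star}\sqrt{D_{t^\star}^{(i)}}$. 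Summing the geometric series and extending it to infinity, which is valid since $\sqrt\rho<1$, bounds the sum by $\sqrt{D_{t^\star}^{(i)}}/(1-\sqrt\rho)$. Combining the three displays gives the stated bound with $C_{\mathrm{tail}}=L_{\mathrm{sm}}L/(1-\sqrt\rho)$, uniformly in $T$.

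The threshold statement then follows directly. Substituting $D_{t^\star}^{(i)}\le\varepsilon$ gives error at most $\delta=C_{\mathrm{tail}}\sqrt\varepsilon$, and solving for $\varepsilon$ gives $\varepsilon(\delta)=\delta^2/C_{\mathrm{tail}}^2$.

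The main obstacle is the first step: making precise that the locked run's terminal posterior equals the frozen $p_{t^\star}^{(i)}$, and that the quantities $D_s^{(i)}$ and $z_s^{(i)}$ in (A2)–(A3) refer to the reference no-lock trajectory. In particular, the locked run's altered context at other positions must not feed back into position $i$'s reported prediction. I will handle this by stating explicitly that, per Sec.~\ref{sec:methods:locking}, $\hat p^{(i)}$ is held fixed after $t^\star$. The comparison is then purely between a constant and the reference trajectory, so no cross-position coupling needs to be bounded. The index alignment in (A2) is a minor point: the contraction is stated for $s>t^\star$, but only $D_{s-1}$ with $s-1\ge t^\star$ is needed, which holds.
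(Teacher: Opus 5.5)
Your proposal is correct and follows essentially the same route as the paper's proof. You freeze $\hat p_T^{(i)}=p_{t^\star}^{(i)}$ via (A1), telescope the logit drift, bound each increment with (A3), use the geometric decay from (A2) to get the factor $1/(1-\sqrt{\rho})$, and finish with (A4); your extra remarks (log-softmax invariance to additive shifts, the index alignment in (A2), and that $z_s^{(i)}, D_s^{(i)}$ refer to the no-lock run) only make explicit what the paper leaves implicit.
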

\begin{proof}
By (A1), locking token position $i$ at $t^\star$ permanently stops the $i$'s step-wise compute, so for all $t\ge t^\star$ we have $\hat p_t^{(i)}=p_{t^\star}^{(i)}$ and $\log\hat p_T^{(i)}=\log p_{t^\star}^{(i)}$.
Therefore
\[
\bigl\|\log p_T^{(i)}-\log \hat p_T^{(i)}\bigr\|_\infty
=
\bigl\|\log p_T^{(i)}-\log p_{t^\star}^{(i)}\bigr\|_\infty
=\bigl\|f(z_T^{(i)})-f(z_{t^\star}^{(i)})\bigr\|_\infty,
\]
By the triangle inequality and telescoping over $s=t^\star\!+\!1,\dots,T$,
\[
\|z_T^{(i)}-z_{t^\star}^{(i)}\|_2
\le \sum_{s>t^\star}^{T}\|z_s^{(i)}-z_{s-1}^{(i)}\|_2.
\]
Applying (A3) term-wise yields 
\[
\|z_T^{(i)}-z_{t^\star}^{(i)}\|_2
\;\le\; L\sum_{s=t^\star+1}^{T}\sqrt{D_{s-1}^{(i)}}.
\]
Under (A2), $D_{s-1}^{(i)}\le \rho^{\,s-1-t^\star}D_{t^\star}^{(i)}$,
therefore $\sqrt{D_{s-1}^{(i)}}\le \rho^{\frac{s-1-t^\star}{2}}\sqrt{D_{t^\star}^{(i)}}$ and
\[
\sum_{s>t^\star}\sqrt{D_{s-1}^{(i)}}\le \frac{1}{1-\sqrt{\rho}}\sqrt{D_{t^\star}^{(i)}}.
\]
Finally, by (A4):
\[
\bigl\|\log p_T^{(i)}-\log p_{t^\star}^{(i)}\bigr\|_\infty
\le L_{\mathrm{sm}}\|z_T^{(i)}-z_{t^\star}^{(i)}\|_2
\le C_{\mathrm{tail}} \sqrt{D_{t^\star}^{(i)}}
\]
which proves the claim and the stated $\varepsilon(\delta)$.
\end{proof}

\subsection{Computational Complexity: Algorithmic FLOPs}
This paper reports \emph{algorithmic} FLOPs, counting only GEMMs (Q/K/V/Out projections, $QK^\top$ and $AV$, and FFN sublayers).\footnote{Element-wise operations (activations, gating, LN, softmax, RoPE) and cache I/O/packing are lower-order relative to
the dominant $O(N_b^2 d)$ attention and $O(N_b d^2)$ FFN matmuls and occur similarly across methods, so omitting
them does not affect relative comparisons.}
Let $N_{\text{gen}}$ the number of positions reserved for generation at initial step, 
$N_{\text{prompt}}$ the number of tokens for the prompt,
$N_b \coloneq \max_{b\in{B}}({N_{\text{prompt}}}) + {N_{\text{gen}}}$ be the sequence length of batch $b$; $S$ the number of sampling steps; $B$ the batch size; $d$ the model dimension; $L$ the number of layers; $H$ the number of attention heads; $d_h\coloneq d/H$ the head size; $H_{\mathrm{kv}}$ the number of $K/V$ heads; $d_{\mathrm{ff}}$ the FFN dimension; $T_b\coloneq BN_b$ the number of token positions; $\mathcal{A}_{t,b}$ the active index set at $t$; and $M_{t,b}\coloneq|\mathcal{A}_{t,b}|$.
We count algorithmic FLOPs as follows:

\paragraph{Baseline.}
For a step $t$, algorithmic FLOPs per batch are constant across $t$:
\[
\mathcal F^{t}_{\text{base,b}}
= L\;(\underbrace{4{B}H{N_b}^2 d_h}_{\text{QK}^\top\!+\!AV}
+ \underbrace{2B{N_b}d^2}_{Q}
+ \underbrace{2B{N_b}d^2}_{\text{Out}}
+ \underbrace{4B{N_b}d\,H_{\!kv}d_h}_{K,V}
+ \underbrace{6B{N_b}d\,d_{\text{ff}}}_{\text{FFN}}\;).
\]

\paragraph{SureLock.} Algorithmic FLOPs exhibit temporal dynamics since step-wise computation is only for active positions $\in \mathcal{A}_{t,b}$, which changes across $t$:
\[
\mathcal F^{t}_{\text{prop,b}} = \frac{|\mathcal{A}_{t,b}|}{BN_b}\mathcal F^t_{\text{base,b}} = \frac{M_{t,b}}{T_b}\mathcal F^t_{\text{base,b}}
\]

\section{Experiments}\label{sec:exp}
This section reports our empirical evaluation. 
We consider two kinds of basic tasks---language modeling and instruction following. 
In each run with different settings, profiling the per-batch sequence length $N_b$ and the number of active positions $M_{t,b}=|\mathcal{A}_{t,b}|$ at each step, we report computational complexity ({algorithmic} FLOPs). 
We also assess any quality degradation in these tasks induced by locking behavior of \textsc{SureLock}.
Unless otherwise noted, we set the number of sampling steps to $S = N_{\text{gen}}$, and fix the percentile for the optional confidence gate (Sec.~\ref{sec:methods:lock-criterion}) to $m{=}20\%$. 
The block length for the semi-aggressive generation option is set to $N_g = N_{\text{gen}}$, yielding a fully parallel configuration.
Temperature and classifier-free guidance scale are set to $0$ by default.

\begin{table}[t]
\centering
\small
\begin{tabular}{cccccc}
\toprule
${N_{\text{gen}}}$ & $\varepsilon$ & $\downarrow \bar r$ & $\downarrow \mathcal F_{\text{base}}$ & $\downarrow \mathcal F_{\text{prop}}$ & $\downarrow \mathcal F_{\text{prop}}/\mathcal F_{\text{base}}$ \\
\midrule
64    & 5e-4 & .548 & 9.017e+11 & 4.936e+11 & 0.547$\times$ \\
64    & 5e-3 & .506 & 9.017e+11 & 4.565e+11 & 0.506$\times$ \\
64    & 5e-2 & .482 & 9.017e+11 & 4.342e+11 & 0.482$\times$ \\
\midrule
256   & 5e-4 & .496 & 3.629e+12 & 1.801e+12 & 0.496$\times$ \\
256   & 5e-3 & .482 & 3.629e+12 & 1.750e+12 & 0.482$\times$ \\
256   & 5e-2 & .475 & 3.629e+12 & 1.725e+12 & 0.475$\times$ \\
\midrule
1024  & 5e-4 & .494 & 1.492e+13 & 7.366e+12 & 0.494$\times$ \\
1024  & 5e-3 & .492 & 1.492e+13 & 7.333e+12 & 0.492$\times$ \\
1024  & 5e-2 & .490 & 1.492e+13 & 7.315e+12 & 0.490$\times$ \\
\bottomrule
\end{tabular}
\caption{\textbf{Algorithmic FLOPs}. $\bar r$ is the micro-averaged ratio of the number of active token positions.}
\label{tab:reduction_by_setting}
\end{table}

\subsection{Masked Diffusion Language Models and Datasets.}
We evaluate leading open-weight and representative MDLMs with 8 billion parameters: LLaDA-8B-Base and LLaDA-8B-Instruct~\citep{nie2025largelanguagediffusionmodels}.

We utilize the following datasets: WikiText-103~\citep{merity2016pointer} for language modeling and MT-Bench~\citep{zheng2023judging} for instruction-following benchmarking. 
These benchmarks evaluate complementary aspects: 
\textsc{WikiText}-103 stresses core token-level modeling on broad, category-agnostic text, 
whereas \textsc{MT-Bench} evaluates instruction-following ability and response helpfulness across eight categories (e.g., ``writing'', ``coding''); 
Together, they cover both core modeling capacity and practical instruction-following behavior.
Data usage settings vary by experimental set, 
so we explain them in the following as needed. 
Detailed settings are in Appendices~\ref{appendix:exp1-preprocess}--~\ref{appendix:exp3-setting}.

\subsection{Results}
\label{sec:results}

\paragraph{Experiment-1: Computational Complexity.}
We first evaluate the impact of \textsc{SureLock} on the reduction of algorithmic FLOPs, using LLaDA-8B-Instruct on a fixed set of 32 single-turn prompts from MT-Bench, which we sampled uniformly from each category~(Appendix.~\ref{appendix:exp1-preprocess}).
We use a subset of \textsc{MT-Bench} to efficiently sweep diverse inference settings.
We set batch size $B=4$, $N_{\text{gen}}=\{64,256,1024\}$, $\varepsilon=\{5e-4, 5e-3, 5e-2\}$, and run to record the actual number of active token positions $M_{t,b}=|\mathcal{A}_{t,b}|$ sequence length $N_b$ per-batch per-step.
Table~\ref{tab:reduction_by_setting} shows that the computational complexity steadily decreased compared to the same sampler without applying \textsc{SureLock}. 
The smaller the locking criterion threshold $\varepsilon$, the lower the reduction rate; it aligns with our design intent where $\varepsilon$ is set for tightening the locking test~(Sec.~\ref{sec:methods:lock-criterion}).
Moreover, on the scale from 5e-2 to 5e-4, $\varepsilon$ is less influential to the reduction of computational complexity.
We also report the micro-averaged ratio of the number of active positions; $\bar{r} = {\sum_b{\sum_t{M_{t,b}}}}/{{\sum_b{\sum_t{B\,N_{b}}}}}$ just as a reference to intuitively grasp how many active rows have been reduced. We can see that $\bar{r}$ is roughly aligned with the reduction ratio of algorithm FLOPs, suggesting that reducing the number of computable token positions leads to reducing computational load.

\paragraph{Experiment-2: Temporal Dynamics of Computational Complexity.}
Figure~\ref{fig:step_ratio} shows the dynamics in computational complexity associated with the progression of sampling steps; 
logging data points was obtained from the same runs as the experiment-1.
We can see that, as sampling progresses, the ratio of algorithmic FLOPs decreases at an accelerating rate. 
This is probably because surrounding most tokens are unmasked in the later steps ($M_t<<N)$ leading to the stability of local-KL.
We put the results on the runs with different settings for $m{=}\{10,40\}\%$ in the Appendix~\ref{appendix:exp2-additional}.

\begin{figure}[t]
    \centering
    \includegraphics[width=.8\linewidth]{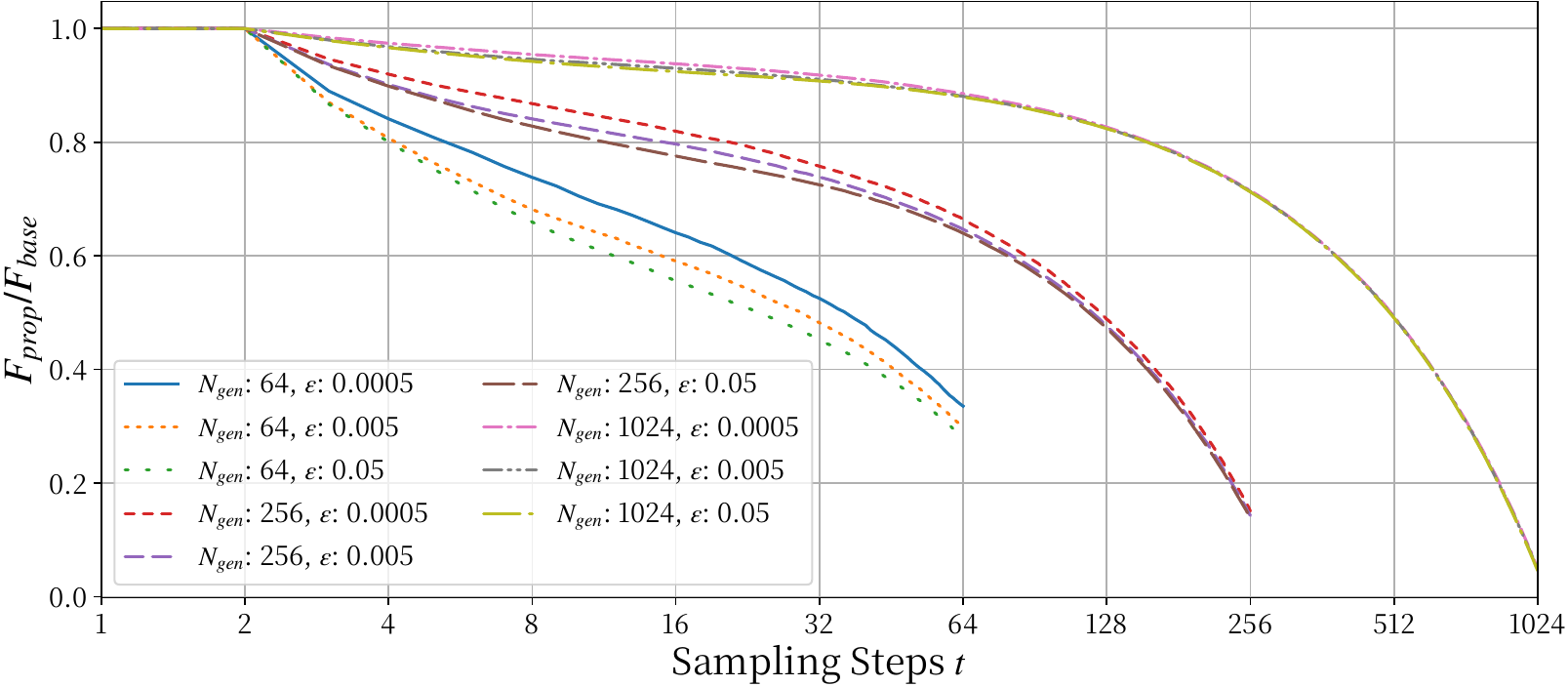}
    \caption{\textbf{Step-wise FLOPs ratio.}
    Ratio of step-wise algorithmic FLOPs, i.e., $\mathcal F^{t}_{\text{prop}}/\mathcal F^{t}_{\text{base}}$,
    consistently decreases as steps proceed, explaining later-step savings of computational cost.}
    \label{fig:step_ratio}
\end{figure}

\begin{table}[t]
\centering
\small
\begin{tabular}{c@{\,\,}c@{\,\,}c@{\,\,\,\,}|c@{\,\,}c@{\,\,\,\,}|c@{\,\,\,\,\,\,}c@{\,\,\,\,\,\,}c}
\toprule
$N_{\text{gen}}$ & steps & $\varepsilon$ & $\downarrow$ Gen.-PPL$_{base}$ & $\downarrow$ $\mathcal F_{base}$ & $\downarrow$ Gen.-PPL$_{prop}$ & $\downarrow$ $\mathcal F_{prop}$  \\
\midrule
64&32&5e-4 & 5.7537 & 4.488e+11 & 6.0782  (1.06$\times$) & 3.138e+11   (0.70$\times$) \\ 
64&32&5e-3 & 5.7537 & 4.488e+11 & 6.2177  (1.08$\times$) & 2.788e+11   (0.62$\times$) \\ 
64&64&5e-4 & 3.4813 & 8.976e+11 & 4.5722  (1.31$\times$) & 5.203e+11   (0.58$\times$) \\ 
64&64&5e-3 & 3.4813 & 8.976e+11 & 4.7596  (1.37$\times$) & 4.664e+11   (0.52$\times$) \\ 
\midrule
128&128&5e-4 & 2.6006 & 1.799e+12 & 2.9202  (1.12$\times$) & 9.585e+11   (0.53$\times$) \\ 
128&128&5e-3 & 2.6006 & 1.799e+12 & 3.1042  (1.19$\times$) & 8.964e+11   (0.50$\times$) \\ 
128&64&5e-4  & 3.1604 & 8.997e+11 & 3.8371  (1.21$\times$) & 5.620e+11   (0.63$\times$) \\ 
128&64&5e-3  & 3.1604 & 8.997e+11 & 3.7353  (1.18$\times$) & 5.083e+11   (0.57$\times$) \\ 
\midrule
256&128&5e-4 & 2.3428 & 1.808e+12 & 2.6092  (1.11$\times$) & 1.012e+12   (0.56$\times$) \\ 
256&128&5e-3 & 2.3428 & 1.808e+12 & 2.6687  (1.14$\times$) & 9.497e+11   (0.53$\times$) \\ 
256&256&5e-4 & 2.0127 & 3.616e+12 & 2.0486  (1.02$\times$) & 1.845e+12   (0.51$\times$) \\ 
256&256&5e-3 & 2.0127 & 3.616e+12 & 2.2103  (1.10$\times$) & 1.779e+12   (0.49$\times$) \\ 
\midrule
512&256&5e-4 & 1.6293 & 3.650e+12 & 1.7134  (1.05$\times$) & 1.919e+12  (0.53$\times$) \\ 
512&256&5e-3 & 1.6293 & 3.650e+12 & 1.7526  (1.08$\times$) & 1.851e+12  (0.51$\times$) \\ 
512&512&5e-4 & 1.4658 & 7.301e+12 & 1.5248  (1.04$\times$) & 3.643e+12  (0.50$\times$) \\ 
512&512&5e-3 & 1.4658 & 7.301e+12 & 1.5444  (1.05$\times$) & 3.588e+12  (0.49$\times$) \\ 
\bottomrule
\end{tabular}
\caption{Generation quality of continuation sequences with and without \textsc{SureLock} on LLaDA-8B-Base using WikiText-103.
Gen.-PPL is the micro averaged PPL evaluated
with LLaMA-3-8B.}
\label{tab:quality-flops-Base}
\end{table}

\begin{table}[t]
\centering
\small
\begin{tabular}{c@{\,\,}c@{\,\,}c@{\,\,\,\,}|c@{\,\,}c@{\,\,\,\,}|c@{\,\,\,\,\,\,}c}
\toprule
$N_{\text{gen}}$ & steps & $\varepsilon$ & $\uparrow$ Score$_{base}$ & $\downarrow \mathcal F_{base}$ & $\uparrow$ Score$_{prop}$ & $ \downarrow \mathcal F_{prop}$ \\
\midrule
64&32&5e-4 &  3.9 & {4.515e+11} &  3.8 (0.97$\times$) & {3.153e+11} (0.698$\times$) \\
64&32&5e-3 &  3.8 & {4.515e+11} & 3.7  (0.97$\times$) & {2.947e+11}  (0.653$\times$) \\
64&64&5e-4 &  4.2 & {9.030e+11} & 4.2  (1.00$\times$ ) & {5.658e+11}  (0.627$\times$) \\
64&64&5e-3 &  4.2 & {9.030e+11} & 4.3  (1.02$\times$) & {5.304e+11}  (0.587$\times$) \\
\midrule
128&64&5e-4 &  4.4 & {9.047e+11} & 4.2  (0.96$\times$) & {5.700e+11}  (0.630$\times$) \\
128&64&5e-3 &  4.3 & {9.047e+11} &  4.2  (0.98$\times$) & {5.406e+11}  (0.598$\times$) \\
128&128&5e-4 &  4.8 & {1.809e+12} &  4.9  (1.02$\times$) & {1.044e+12}  (0.577$\times$) \\
128&128&5e-3 &  4.8 & {1.809e+12} & 5.1  (1.06$\times$) & {1.000e+12}  (0.553$\times$) \\
\midrule
256&128&5e-4 &  4.2 & {1.817e+12} &  4.4  (1.05$\times$) & {1.042e+12}  (0.573$\times$) \\
256&128&5e-3 &  4.3 & {1.817e+12} &  4.2  (0.98$\times$) & {1.007e+12}   (0.554$\times$) \\
256&256&5e-4 &  4.4 & {3.634e+12} &  4.5  (1.02$\times$) & {1.969e+12}   (0.542$\times$) \\
256&256&5e-3 &  4.4 & {3.634e+12} &  4.7  (1.07$\times$) & {1.920e+12}   (0.528$\times$) \\
\midrule
512&256&5e-4 &  4.0 & {3.667e+12} &  4.3  (1.08$\times$) & {1.971e+12}   (0.537$\times$) \\
512&256&5e-3 & 4.0 & {3.667e+12} &  4.2  (1.05$\times$) & {1.934e+12}   (0.528$\times$) \\
512&512&5e-4 &  4.5 & {7.334e+12} &  4.7  (1.04$\times$) & {3.819e+12}   (0.521$\times$) \\
512&512&5e-3 &  4.4 & {7.334e+12} &  4.8  (1.09$\times$) & {3.777e+12}   (0.515$\times$) \\
\bottomrule
\end{tabular}
\caption{Quality changes in generated responses by LLaDA-Instruct. Score$_{base}$ and Score$_{prop}$ indicate the overall score for the MT-Bench with single-turn settings evaluated with gpt-4o.}
\label{tab:quality-flops-Instruct}
\end{table}

\begin{figure}[t]
  \centering
  \begin{subfigure}{0.43\linewidth}
    \centering
    \includegraphics[width=\linewidth]{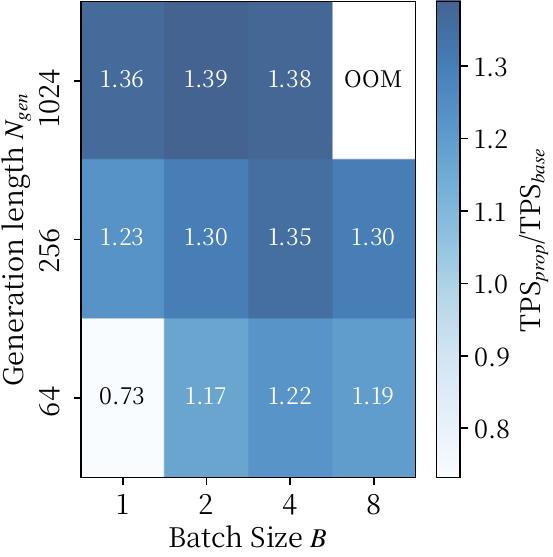}
    \subcaption{\textbf{Ratio of End-to-end TPS}}
    \label{fig:speed_map:a}
  \end{subfigure}
  \hfill
  \begin{subfigure}{0.55\linewidth}
    \centering
    \includegraphics[width=\linewidth]{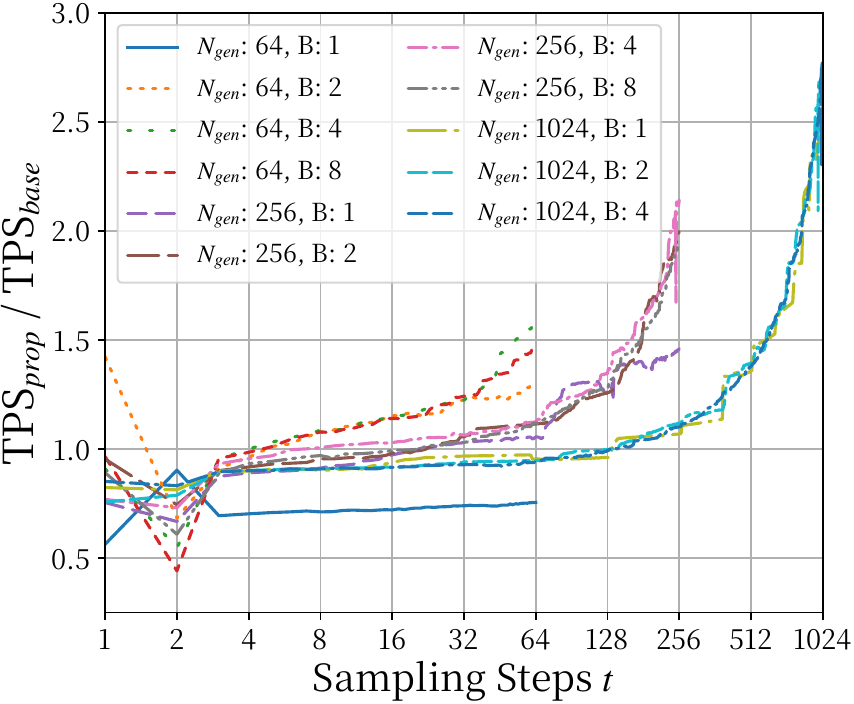}
    \subcaption{\textbf{Ratio of step-wise E2E-TPS}}
    \label{fig:speed_map:b}
  \end{subfigure}
  \caption{Throughput behavior with \textsc{SureLock}: (a) end-to-end TPS ratio across different ${N_{\text{gen}}}$ and batch size $B$.; (b) per-step TPS ratio increasing as sampling progresses.}
  \label{fig:speed_map_combined}
\end{figure}

\paragraph{Experiment-3: Trade-off between Efficiency and Generation Quality.}
We evaluate LLaDA-8B-Base on Wikitext-103, where we prompt a fixed set of text fragments, requiring the model to generate the continuation, and report Gen.-PPL measured by an external AR model---LLaMA-3-8B~\citep{grattafiori2024llama}. 
More details are in Appendix~\ref{appendix:exp3-setting}.
We also evaluate LLaDA-8B-Instruct on MT-Bench with single-turn settings; we report the averaged LLM-as-a-judge score from gpt-4o for the generated responses. Other details are described in Appendix~\ref{appendix:exp3-setting}.
We set batch size $B=4$, $N_{\text{gen}}=\{64,128,256,512\}$, $\varepsilon=\{5e-4, 5e-3\}$ thoughout datasets, and $N_{\text{p}}=64$ in WikiText-103.
Table~\ref{tab:quality-flops-Base} and Table~\ref{tab:quality-flops-Instruct} show that while computational complexity is indeed decreasing, performance is being maintained at competitive levels.
For the instruct model, the MT-Bench score is essentially unchanged across all settings (at most $-0.1$pt), despite the reduced compute.
The base model follows the same trend in most configurations, but exhibits relatively larger degradation (up to $\ge 1.21\times$ Gen.-PPL) for short generation lengths ($N_{\text{gen}}\in\{64,128\}$).
It suggests that, in language modeling where prompts are likely to be open-ended, choosing very short $N_{\text{gen}}$ can lead to less desirable behavior under \textsc{SureLock}. 
While this can affect creative tasks like novel generation, such tasks are rarely run with very short outputs.
{As an additional check for a more demanding downstream task, we also conduct a small-scale evaluation of LLaDA-8B-Instruct on the HumanEval code-generation benchmark~\citep{chen2021codex}. 
Here, correctness is assessed by executing the generated code against unit tests, where even small local errors can cause failure. Under a representative decoding configuration, \textsc{SureLock} shows no deterioration in Pass@1 while providing substantial compute reduction, i.e., $\sim 0.5x$ (Appendix~\ref{appendix:humaneval}), indicating that the observed changes in Gen.-PPL largely correspond to benign surface-level variations that do not break sentence structure or semantics. We also provide different analyses in Experiment-6 on how to interpret Gen.-PPL.}

\paragraph{Experiment-4: Runtime Performance.}
We evaluate End-to-end Token/Sec (E2E-TPS), which is the sustained decoding throughput of the model across multiple batches, and temporal dynamics for step-wise TPS. 
See Appendix~\ref{appendix:runtime-metric} for more details on the metrics.
We follow the settings in Experiment-1 for the dataset and the model. We set batch size $B=\{1,2,4,8\}$, $N_{\text{gen}}=\{64,256,1024\}$, $\varepsilon=5e-3$.
Figure~\ref{fig:speed_map:a} shows that
\textsc{SureLock} can achieve greater runtime gains in compute-bound areas
e.g., $N{\ge}256$, $B{>}1$.
However, no runtime gain was observed under relatively light computational load settings ($N_g{=}64$, $B{=}1$); this differs from the trend in computational complexity. 
{By design, SureLock replaces dense, uniform computation with more irregular token-sparse computation, so some implementation-level overheads (e.g., irregular cache accesses) can offset part of the FLOP savings, especially for smaller or moderately sized models. As a result, the TPS gains we report from our vanilla PyTorch implementation are conservative; Appendix~\ref{appendix:runtime:discussion} analyzes the cause of overheads in detail and discusses the possibility of hardware-specific optimizations (e.g., fused kernels) that could further align FLOP reductions with wall-clock speedups.}
See Figure~\ref{fig:speed_map:b} for temporal dynamics of step-wise TPS. We can see that the gain in local runtime increases as the step progresses, which is consistent with the computational trends~(Figure~\ref{fig:step_ratio}).

\paragraph{Experiment-5: Orthogonality to Existing Approaches for FLOPs/Runtime Improvement.}
{As already discussed in Section~\ref{sec:intro}, Temporal-, Reuse-, and Selection-based approaches have optimization axes orthogonal to SureLock. 
While the number of queried positions remains proportional to N at every step, 
SureLock reduces the set of active positions that these operate on. 
Therefore, while we do not treat these acceleration methods as primary baselines, demonstrating that SureLock complements them in terms of FLOPs/Runtime improvements is still beneficial. 
For this purpose, we implement a surrogate version of selection-based method~\citep{liu2025dllm} that computes only on a fixed fraction $k$ of active tokens at each step, and report algorithmic FLOPs/Runtime under three settings: i) selection-based ($k=0.8$), ii) SureLock, and iii) their combination (i,e., selecting the fixed fraction $k=0.8$ of positions among positions not yet locked). 
Note that, for a focused comparison, we did not search over $k$ for optimal acceleration, so the results of the selection baseline should be interpreted as conservative; our goal here is to illustrate the relative behavior.
Table~\ref{tab:hybrid} shows that the combination yields additional acceleration over either component alone, supporting our claim that SureLock and selection-based sparsification cooperate rather than compete. 
}

{
\begin{table}[t]
\centering
\small
\begin{tabular}{cc|c@{\,\,\,\,\,\,}c}
\toprule
SureLock & Selection ($k=0.8$) & $ \uparrow \text{TPS}_{prop}/\text{TPS}_{base}$ & $ \downarrow \mathcal F_{prop}/\mathcal F_{base}$ \\
\midrule
\checkmark          & - &  1.30$\times$          & 0.54$\times$ \\
- & \checkmark &  1.18$\times$          & 0,64$\times$ \\
\checkmark & \checkmark  &  \textbf{1.73}$\times$ & \textbf{0.28}$\times$ \\
\bottomrule
\end{tabular}
\caption{{Algorithmic FLOPs/Runtime for different acceleration methods, evaluated using LLaDA-8B-Instruct on MT-Bench with $N_{\text{gen}}=S=256$.}}
\label{tab:hybrid}
\end{table}
}

\paragraph{Experiment-6: Room for Optimization of Locking Threshold.}
{In our main experiments~(Table~\ref{tab:quality-flops-Base} and Table~\ref{tab:quality-flops-Instruct}), 
we deliberately used a fixed set of global locking thresholds across all configurations, rather than tuning it per setting, aiming to reveal the intrinsic behavior of SureLock under a wide range of generation length.
This choice makes it clear that the method is more effective in compute-heavy settings (longer outputs and more reverse steps), while the configuration is relatively aggressive for short generations.
However, as in Theorem 1, by tightening $\varepsilon$, we can reduce the final distributional error. 
Here, we demonstrate the potential for $\varepsilon$-optimization by sweeping $\varepsilon$. 
We adopt the settings of short sequence language modeling using LLaDA-8B-Base here, which exhibited relatively lower quality in the main experiments.
Table~\ref{tab:quality-flops-Base-epsilon} shows that, even for short generations, an optimal $\varepsilon$ exists that minimizes quality degradation while still providing nontrivial acceleration.}

\begin{table}[t]
\centering
\small
\begin{tabular}{c@{\,\,}c@{\,\,\,\,\,}l@{\,\,\,\,}|c@{\,\,}c@{\,\,\,\,}|c@{\,\,\,\,\,\,}c@{\,\,\,\,\,\,}c}
\toprule
$N_{\text{gen}}$ & steps & ~~~$\varepsilon$ & $\downarrow$ Gen.-PPL$_{base}$ & $\downarrow$ $\mathcal F_{base}$ & $\downarrow$ Gen.-PPL$_{prop}$ & $\downarrow$ $\mathcal F_{prop}$  \\
\midrule
64&64&5e-8 & 3.4813 & 8.976e+11 &   4.0033 (1.14$\times$) & 5.930e+11 (0.66$\times$) \\ 
64&64&5e-7 & 3.4813 & 8.976e+11 &   4.3489 (1.25$\times$) & 5.430e+11 (0.60$\times$) \\ 
64&64&5e-4 & 3.4813 & 8.976e+11 & 4.5722  (1.31$\times$) & 5.203e+11   (0.58$\times$) \\ 
64&64&5e-3 & 3.4813 & 8.976e+11 & 4.7596  (1.37$\times$) & 4.664e+11   (0.52$\times$) \\ 
\bottomrule
\end{tabular}
\caption{{Generation quality of continuation sequences with and without \textsc{SureLock} on LLaDA-8B-Base on WikiText-103 for different values of~$\varepsilon$.
}}
\label{tab:quality-flops-Base-epsilon}
\end{table}

\begin{figure*}[t]
\footnotesize
\centering
\begin{tcolorbox}[colback=gray!15, colframe=black!20, 
                  width=\textwidth, boxrule=0pt, arc=2pt,
                  left=6pt, right=6pt, top=6pt, bottom=6pt]
\textbf{Question:} Benjamin went to a bookstore and purchased a variety of books. He bought 5 copies of a sci-fi novel, each priced at \$20, 3 copies of a history book priced at \$30 each, and 2 copies of a philosophy book for \$45 each.
What was the total cost of his purchases?.
\end{tcolorbox}
\vspace{0.2em}
\begin{minipage}[t]{0.495\textwidth}
\begin{tcolorbox}[colback=gray!10, colframe=black!15, 
                  width=\linewidth, boxrule=0pt, arc=2pt,
                  left=5pt, right=5pt, top=6pt, bottom=6pt]
\textbf{Baseline} (score: 10)\\
Benjamin bought 5 copies of a sci-fi novel, each priced at \$20, so he spent 5 * \$20 = \$100.
He bought 3 copies of a history book, each priced at \$30, so he spent 3 * \$30 = \$90.
He bought 2 copies of a philosophy book, each priced at \$45, so he spent 2 * \$45 = \$90.
The total cost of his purchases is \$100 + \$90 + \$90 = \$280.
Summing up,
\end{tcolorbox}
\end{minipage}\hfill
\begin{minipage}[t]{0.49\textwidth}
\begin{tcolorbox}[colback=gray!10, colframe=black!15, 
                  width=\linewidth, boxrule=0pt, arc=2pt,
                  left=5pt, right=5pt, top=6pt, bottom=6pt]
\textbf{SureLock} (score: 10)\\
Benjamin bought 5 copies of a sci-fi novel for \$20 each, so the total cost is 5 * \$20 = \$100.
He also bought 3 copies of a history book for \$30 each, so the total cost is 3 * \$30 = \$90.
Lastly, he bought 2 copies of a philosophy book for \$45 each, so the total cost is 2 * \$45 = \$90.
The total cost of his purchases is \$100 + \$90 + \$90 = \$280.
Con
\end{tcolorbox}
\end{minipage}
\caption{Comparison of responses between Baseline vs. \textsc{SureLock} on LLaDA-8B-Instrut with $\varepsilon=5e-4$, ${N_{\text{gen}}}=128$, $S=128$. The question is sampled from MT-bench with question id$=119$.}
\label{fig:case-study}
\end{figure*}

\paragraph{Example Analysis.}
Feeding the same question sampled from MT-Bench to the LLaDA-8B-Instruct, Figure~\ref{fig:case-study} shows the responses from the same sampler with and without \textsc{SureLock}. 
With settings achieving approximately $0.6\times$ of computational complexity on average (See Table~\ref{tab:quality-flops-Instruct}), 
we can confirm that not only the quantitative score but also the qualitative quality is comparable to the baseline,
{with minor, localized variations. While locking might seem to introduce disrupted generation, the impact is limited. {Furthermore, we confirmed in code generation that these minor differences do not disrupt the sentence syntax~(Appendix~\ref{appendix:humaneval}).}}
More examples are in Appendix~\ref{appendix:example}.

\section{Related Work}
\paragraph{Computational savings for DLMs.}
Prior efforts to decrease the sampling cost of discrete diffusion LMs (DLMs) largely follow two axes---\emph{Temporal} and \emph{Reuse}. 
\emph{Temporal} methods shrink the number of sampling steps $T$ via parallel/dilated unmasking and edit-based updates~\citep{ghazvininejad2019mask,gu2019levenshtein,luxembourg2025plan,israel2025accelerating,huang2025ctrldiff,wei2025accelerating},
or by adopting higher-order / distilled samplers from diffusion literature, e.g., DPM-Solver~\citep{lu2022dpm}.
\emph{Reuse} methods amortize work across steps by reusing or approximating intermediate states—e.g., $K/V$ caching or approximation and selective feature refresh~\citep{ma2025dkv,liu2025dllm,wu2025fast},
Selection-based partial compute reduces per-step complexity by updating only a fixed number of subset positions, sometimes with periodic full recomputation for stability~\citep{liu2025dllm,wu2025fast}.
These two axes primarily reduce \emph{how many steps} we take or \emph{how much previous computation} we can reuse between steps; they typically leave the \emph{within-step} active-query count constant up to the end of the sampling step, so per-step compute remains bounded by $N$ even late in sampling.
Crucially, these are complementary to \textsc{SureLock}; by focusing only on the gradually contracting active positions, they could yield even greater savings.

\paragraph{DLMs on longer sequences.}
Scaling DLMs to longer contexts has been actively explored 
such as span/block-wise masking~\citep{arriolablock}, adaptive masked token insertion~\citep{arriolablock}, DLMs' suite Rotary Position Embedding (RoPE)~\cite{liu2025longllada}.
This line of research directions yields diverse applications such as reasoning on massive code bases. 
By contrast, decoding longer sequences by DLMs generally requires more sampling steps, making the stationary per-step computational cost more severe. 
Advancing techniques such as \textsc{SureLock}, which monotonically reduces this stationary compute as sampling proceeds, could facilitate DLMs' research on longer contexts, which lags behind compared to AR models~\citep{wu2025shifting}, i.e., thousands vs. millions.

\section{Conclusion}
We introduced \emph{\textsc{SureLock}}, a method for iterative decoding in masked-diffusion LMs (MDLMs) that locks converged token positions and thereafter bypasses their query projection and per-token FFN, 
yielding a \emph{monotonic} reduction in per-step compute. 
Locked positions remain fully attendable via cached $K/V$ vectors.
The core design is a \emph{local KL} locking test; 
we justified this approach by deriving a closed-form link between the lock-time local KL and an error bound on the terminal log-probability.
On language modeling and instruction-following tasks, 
\textsc{SureLock} achieves large reductions in algorithmic FLOPs while maintaining task quality. 

\paragraph{Future work.} Our approach is orthogonal to step-count reduction and inter-step reuse techniques, as well as selection approaches~(Sec.~\ref{sec:intro}); such methods can operate on the progressively shrinking active positions induced by \textsc{SureLock}.
{Experiment~5 shows that combining with an orthogonal approach improves acceleration; full co-optimization is left for future work. Our “converge-then-lock’’ idea is modality-agnostic, but our KL-based formulation assumes discrete posteriors; a continuous version would require explicit local uncertainty models and a new theoretical treatment.}

\paragraph{Limitations.}
We report efficiency in terms of algorithmic FLOPs, a kernel- and hardware-agnostic proxy for theoretical savings that decouples the sampling algorithm from wall-clock implementations. Theorem~1 is used only to justify the KL-based locking rule rather than to produce numeric thresholds; 
such model- and data-specific tuning is left for future work.
{Our locking rule is based on step-wise distributional stability in the posterior~(Sec.~\ref{sec:methods:lock-criterion}), which does not formally guarantee that semantics are fully determined, while we observe no serious degradation in our benchmarks probably because \textsc{SureLock} only locks confidently unmasked positions (Appendix~\ref{appendix:semantic-stability}).}

\clearpage

\section*{Ethics Statement}
This paper aims to reduce computational complexity in inference time for discrete diffusion models, which may have the potential to significantly impact real-world applications. 
Furthermore, this research does not require diffusion models to generate harmful or sexual content. 
We believe our work does not directly contribute to malicious use for such purposes. 

\section*{Reproducibility Statement}
Experimental settings are included in Sec.~\ref{sec:exp}. 
More detailed descriptions are provided in the Appendices. 
We provide more details at the project page: \url{https://daioba.github.io/surelock}.

\section*{Acknowledgement}
This work was partially supported by JSPS KAKENHI Grant Number 25H01137 and
JST K Program Japan Grant Number JPMJKP24C3.

Danushka Bollegala holds concurrent appointments as a Professor at University of Liverpool and as an Amazon Scholar.
This paper describes work performed at the University of Liverpool and is not associated with Amazon.

\bibliography{iclr2026_conference}
\bibliographystyle{iclr2026_conference}

\clearpage
\appendix
\section{Appendix}
\section{Usage of Large Language Models}
We used LLMs to assist with writing paragraphs and searching literature. 
However, we retain full responsibility for the content in this paper.

{
\section{Empirical Investigation of Assumption A2}\label{appendix:assumption:A2}
The assumption of monotonic decrease of the step-wise KL divergence in the reverse step in Theorem~\ref{thm:lock} is, in principle, theoretical.
However, it is informative to gain a better understanding of how it behaves in practice; we visualize the dynamics of the step-wise KL divergence in Figure~\ref{fig:kl-step}. 
We used LLaDA-8B-Instruct with $\varepsilon=5e-4$, $N_\text{gen}=128$, and $S=128$ , on MT-Bench's 16 samples. We can see that the Step-wise KL indeed monotonically decreases as the sampling proceeds; we can infer that A2 is not merely a theoretical assumption, and it does not significantly deviate from the practical situation.
}

\begin{figure}[h]
    \centering
    \includegraphics[width=.95\linewidth]{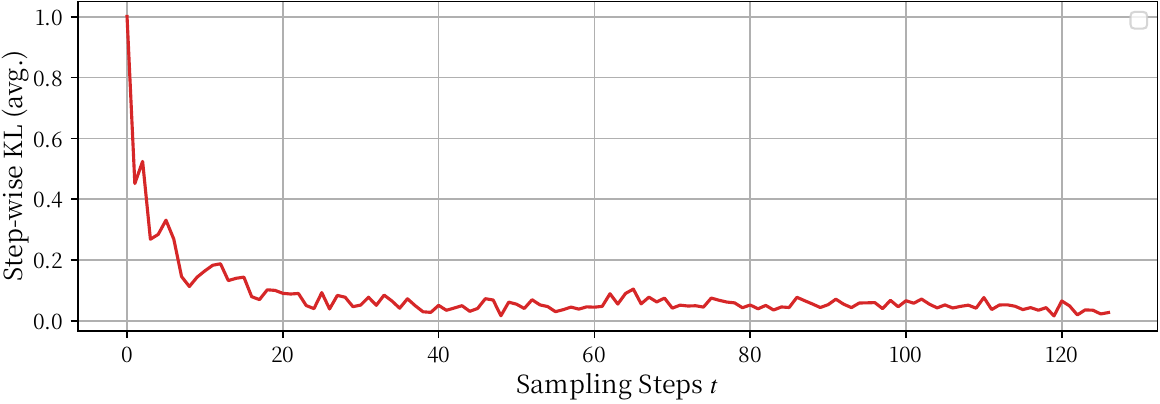}
    \caption{\textbf{Dynamics of Step-wise KL divegence.}
    Averaged step-wise KL divergence measured using LLaDA-8B-Instruct on MT-Bench prompts.}
    \label{fig:kl-step}
\end{figure}

\section{Detailed Explanation of Assumption A3.}
\label{appendix:assumption:A3}
We show that the one–step deviation of the logit vector $z^{(i)}_s$ at position $i$ is controlled by the \emph{previous} step's drift on local posterior measured by ${\mathrm{KL}}$ divergence.
Throughout, let $p_t^{(j)}\in\Delta^{V-1}$ be the token posterior at position $i$ and step $t$,
$D_t^{(j)}\!\coloneq\!\mathrm{KL}\!\bigl(p_t^{(j)}\,\|\,p_{t-1}^{(j)}\bigr)$,
and let $E\in\mathbb{R}^{V\times d_{\mathrm{model}}}$ denote the embedding matrix (rows are token embeddings).
We use the block norm $\|\cdot\|_{2,1}$ defined by $\|X\|_{2,1}=\sum_{j}\|x_j\|_2$ for $X=[x_1,\dots,x_n]^\top$.

\paragraph{Reclaiming Assumption A3.}
There exists a constant $L>0$, depending only on operator norms of the network weights and softmax/LN/activation Lipschitz constants, such that for all steps $s$ and positions $i$,
\begin{equation}
\label{eq:a3}
\|z_s^{(i)}-z_{s-1}^{(i)}\|_2 \;\le\; L\,\sqrt{D_{s-1}^{(i)}}.
\end{equation}
Below we prove a slightly stronger \emph{global} bound depending on $\sum_j \sqrt{D_{s-1}^{(j)}}$ and then specialize it to the local form \eqref{eq:a3} using a locality factor.

\subsection{From posterior drift to expected-embedding drift}
For any $q,r\in\Delta^{V-1}$ define the expected input (embedding) $x(q)\coloneq E^\top q\in\mathbb{R}^{d_{\mathrm{model}}}$. 
Then
\begin{align}
\|x(q)-x(r)\|_2
= \|E^\top(q-r)\|_2
\le \|E\|_{2}\,\|q-r\|_2
\le \|E\|_{2}\,\|q-r\|_1.
\tag{A.3.1}\label{eq:a1}
\end{align}
By Pinsker, $\|q-r\|_1 \le \sqrt{2\,\mathrm{KL}(q\|r)}$, so with $L_{\mathrm{emb}}\!\coloneq\!\|E\|_{2}\sqrt{2}$ we have
\begin{equation}
\|x(q)-x(r)\|_2 \;\le\; L_{\mathrm{emb}}\;\sqrt{\mathrm{KL}(q\|r)}.
\tag{A.3.2}\label{eq:a2}
\end{equation}
Applying \eqref{eq:a2} to successive steps at position $i$,
\begin{equation}
\Delta x_{s}^{(j)} \;\coloneq\; x\!\bigl(p_{s-1}^{(j)}\bigr)-x\!\bigl(p_{s-2}^{(j)}\bigr),
\qquad
\|\Delta x_{s}^{(j)}\|_2 \;\le\; L_{\mathrm{emb}}\sqrt{D_{s-1}^{(j)}}.
\tag{A.3.3}\label{eq:a3-emb}
\end{equation}

\subsection{Single-head attention: Lipschitz bound}
Consider a single-head attention with parameters $(W_Q,W_K,W_V)$ and key/query dimension $d_k$.
For input $X=[x_1,\dots,x_n]^\top$, the head output at position $i$ is
\begin{equation}
o_i(X)=\sum_{j=1}^n \alpha_{ij}(X)\,W_V x_j,
\qquad
\alpha_{i\bullet}(X)=\mathrm{softmax}\!\left(\tfrac{1}{\sqrt{d_k}}(W_Q x_i)(W_K X)^\top\right).
\tag{A.3.4}\label{eq:a4}
\end{equation}
For two inputs $X$ and $X'=X-\Delta X$, a standard three-term decomposition gives
\begin{equation}
\Delta o_i \;\coloneq\; o_i(X)-o_i(X') \;=\;
\underbrace{\sum_j \alpha'_{ij}\,W_V\,\Delta x_j}_{(I)}
+ \underbrace{\sum_j (\alpha_{ij}-\alpha'_{ij})\,W_V\,x'_j}_{(II)}
+ \underbrace{\sum_j (\alpha_{ij}-\alpha'_{ij})\,W_V\,\Delta x_j}_{(III)}.
\tag{A.3.5}\label{eq:a5}
\end{equation}
We upper bound $(I)$ and $(II)$ and subsume $(III)$ into them (yielding a conservative bound).

\paragraph{Term (I)}
By the triangle inequality,
\begin{equation}
\|(I)\|_2 \le \|W_V\|_2\sum_{j}\alpha'_{ij}\,\|\Delta x_j\|_2
\le \|W_V\|_2\sum_{j}\|\Delta x_j\|_2
= \|W_V\|_2\,\|\Delta X\|_{2,1}.
\tag{A.3.6}\label{eq:a6}
\end{equation}

\paragraph{Softmax Jacobian bound.}
Let $\alpha=\mathrm{softmax}(s)$ and $J_{\mathrm{sm}}(s)=\mathrm{diag}(\alpha)-\alpha\alpha^\top$.
It is classical that
\begin{equation}
\sup_{s}\|J_{\mathrm{sm}}(s)\|_2 = \tfrac{1}{2},
\tag{A.3.7}\label{eq:a7}
\end{equation}
achieved at two-point distributions. Hence, with mean value theorem, for score vectors $s_i$ and $s'_i$,
\begin{equation}
\|\alpha_{i\bullet}-\alpha'_{i\bullet}\|_2 \le \tfrac{1}{2}\,\|s_i-s'_i\|_2,
\qquad
\|\alpha_{i\bullet}-\alpha'_{i\bullet}\|_1 \le \sqrt{n}\,\|\alpha_{i\bullet}-\alpha'_{i\bullet}\|_2.
\tag{A.3.8}\label{eq:a8}
\end{equation}

\paragraph{Score difference.}
Write the $(i,j)$ score as
\begin{equation}
s_{ij}=\tfrac{1}{\sqrt{d_k}}\langle W_Q x_i,\,W_K x_j\rangle.
\tag{A.3.9}\label{eq:a9}
\end{equation}
Then
\begin{align}
|\Delta s_{ij}|
&= \tfrac{1}{\sqrt{d_k}}\big|\langle W_Q \Delta x_i,\,W_K x'_j\rangle + \langle W_Q x_i,\,W_K \Delta x_j\rangle + \langle W_Q \Delta x_i,\,W_K \Delta x_j\rangle\big|
\nonumber\\
&\le \tfrac{\|W_Q\|_2\|W_K\|_2}{\sqrt{d_k}}\left(\|\Delta x_i\|_2\,\|x'_j\|_2 + \|x_i\|_2\,\|\Delta x_j\|_2 + \|\Delta x_i\|_2\,\|\Delta x_j\|_2\right).
\tag{A.3.10}\label{eq:a10}
\end{align}
Assume a uniform radius bound $\|x_j\|_2,\|x'_j\|_2\le R_x$ for all $j$ (true if inputs are bounded and LN is used).
Summing \eqref{eq:a10} over $j$ and using Cauchy–Schwarz and $\|\Delta X\|_{2,1}=\sum_j\|\Delta x_j\|_2$ yields
\begin{equation}
\|\Delta s_{ij}\|_2 = \sqrt{\sum^n_{j=1}\Delta{(s_{i,j})^2}}
\le \tfrac{\|W_Q\|_2\|W_K\|_2}{\sqrt{d_k}}
\left( \sqrt{n}\,R_x\,\|\Delta x_i\|_2 + R_x\,\|\Delta X\|_{2,1} + \|\Delta x_i\|_2\,\|\Delta X\|_{2,1} \right).
\tag{A.3.11}\label{eq:a11}
\end{equation}

\paragraph{Term (II).}
Using the triangle inequality 
and operator norm of $W_V$, we have
\begin{equation}
\|(II)\|_2
= \Big\|\sum_j (\alpha_{ij}-\alpha'_{ij})\,W_V x'_j\Big\|_2
\;\le\; \|W_V\|_2 \sum_j |\alpha_{ij}-\alpha'_{ij}|\,\|x'_j\|_2.
\tag{A.3.12a}\label{eq:a12a}
\end{equation}
Assuming $\|x'_j\|_2 \le R_x$ for all $j$, we obtain
\begin{equation}
\|(II)\|_2 \;\le\; \|W_V\|_2\,R_x\,\|\alpha_{i\bullet}-\alpha'_{i\bullet}\|_1.
\tag{A.3.12b}\label{eq:a12b}
\end{equation}
Using the norm inequality $\|\cdot\|_1 \le \sqrt{n}\,\|\cdot\|_2$,
\begin{equation}
\|(II)\|_2 \;\le\; \|W_V\|_2\,R_x\,\sqrt{n}\,\|\alpha_{i\bullet}-\alpha'_{i\bullet}\|_2.
\tag{A.3.12c}\label{eq:a12c}
\end{equation}
Finally, applying the softmax Jacobian bound 
$\|\alpha_{i\bullet}-\alpha'_{i\bullet}\|_2 \le \tfrac{1}{2}\,\|s_i-s'_i\|_2$ (cf.\ \eqref{eq:a8}), we arrive at
\begin{equation}
\|(II)\|_2 \;\le\; \tfrac{\|W_V\|_2\,\sqrt{n}\,R_x}{2}\,\|s_i-s'_i\|_2.
\tag{A.3.12}\label{eq:a12}
\end{equation}
Substituting \eqref{eq:a11} into \eqref{eq:a12}, we obtain
\begin{align}
\|(II)\|_2
&\le \tfrac{\|W_V\|_2\,\sqrt{n}\,R_x}{2}
   \cdot \tfrac{\|W_Q\|_2\|W_K\|_2}{\sqrt{d_k}}
   \Big(\sqrt{n}\,R_x\,\|\Delta x_i\|_2
        + R_x\,\|\Delta X\|_{2,1}
        + \|\Delta x_i\|_2\,\|\Delta X\|_{2,1}\Big).
\tag{A.3.13a}\label{eq:a13a}
\end{align}
Using $\|\Delta x_i\|_2 \le \|\Delta X\|_{2,1}$, this simplifies to
\begin{equation}
\|(II)\|_2
\;\le\; \|W_V\|_2
\cdot \tfrac{\|W_Q\|_2\|W_K\|_2}{2\sqrt{d_k}}
\cdot n R_x^2 \;\|\Delta X\|_{2,1}.
\tag{A.3.13b}\label{eq:a13b}
\end{equation}
Therefore, combining \eqref{eq:a6} and \eqref{eq:a13b}, we obtain the single-head bound
\begin{equation}
\|\Delta o_i\|_2 \;\le\; A_{\mathrm{att}}\;\|\Delta X\|_{2,1},
\qquad
A_{\mathrm{att}}\;\coloneq\; \|W_V\|_2\left[\,1+\tfrac{\|W_Q\|_2\|W_K\|_2}{2}\,\tfrac{n\,R_x^2}{\sqrt{d_k}}\,\right].
\tag{A.3.13}\label{eq:a13}
\end{equation}

\subsection{Block composition: MHA + FFN + residual + LN}
Let one Transformer block be
\begin{equation}
B(X) \;=\; X + \mathrm{FFN}\!\left(\mathrm{MHA}\!\left(\mathrm{LN}(X)\right)\right).
\tag{A.3.14}\label{eq:a14}
\end{equation}
Assume Lipschitz bounds for each component (w.r.t.\ $\|\cdot\|_{2,1}$)
\begin{equation}
\|\mathrm{LN}(X)-\mathrm{LN}(X')\|_{2,1}\le L_{\mathrm{ln}}\|X-X'\|_{2,1},\quad
\|\mathrm{MHA}(U)-\mathrm{MHA}(U')\|_{2,1}\le A_{\mathrm{mha}}\|U-U'\|_{2,1},
\tag{A.3.15}\label{eq:a15}
\end{equation}
and
\begin{equation}
\|\mathrm{FFN}(V)-\mathrm{FFN}(V')\|_{2,1}\le A_{\mathrm{ff}}\|V-V'\|_{2,1}
\tag{A.3.16}\label{eq:a16}
\end{equation}
For multi-head attention with $h$ heads concatenated and an output projection $W_O$, 
one may take
\begin{equation}
A_{\mathrm{mha}} \;\le\; \|W_O\|_2\,\Big(\max_{\text{head}} A_{\mathrm{att}}^{\text{(head)}}\Big).
\tag{A.3.17}\label{eq:a17}
\end{equation}
By the residual form \eqref{eq:a14},
\begin{equation}
\|B(X)-B(X')\|_{2,1}
\le \|X-X'\|_{2,1} + A_{\mathrm{ff}}\,A_{\mathrm{mha}}\,L_{\mathrm{ln}}\,\|X-X'\|_{2,1}
= L_{\mathrm{blk}}\|X-X'\|_{2,1},
\tag{A.3.18}\label{eq:a18}
\end{equation}
with
\begin{equation}
L_{\mathrm{blk}}\;\coloneq\; 1 + A_{\mathrm{ff}}\,A_{\mathrm{mha}}\,L_{\mathrm{ln}}.
\tag{A.3.19}\label{eq:a19}
\end{equation}
For a stack of $H$ blocks,
\begin{equation}
\|B^{H}(X)-B^{H}(X')\|_{2,1}\;\le\; L_{\mathrm{blk}}^{H}\,\|X-X'\|_{2,1}.
\tag{A.3.20}\label{eq:a20}
\end{equation}
Finally, with readout $z_i=W_o h_i$, we get
\begin{equation}
\|z_i(X)-z_i(X')\|_2 \;\le\; \|W_o\|_2\,\|B^{H}(X)-B^{H}(X')\|_{2,1}
\;\le\; \underbrace{\|W_o\|_2\,L_{\mathrm{blk}}^{H}}_{L_{\mathrm{net}}}\,\|X-X'\|_{2,1}.
\tag{A.3.21}\label{eq:a21}
\end{equation}

\subsection{Putting things together}
Applying \eqref{eq:a21} to successive sampler states $(X_s,X_{s-1})$ and using \eqref{eq:a3-emb},
\begin{align}
\|z_s^{(i)}-z_{s-1}^{(i)}\|_2
&\le L_{\mathrm{net}}\,\|X_s-X_{s-1}\|_{2,1}
= L_{\mathrm{net}}\,\sum_{j}\|\Delta x_s^{(j)}\|_2
\le L_{\mathrm{net}}\,L_{\mathrm{emb}}\sum_{j}\sqrt{D_{s-1}^{(j)}}.
\tag{A.3.22}\label{eq:a22}
\end{align}
Define $L_{\mathrm{all}}\coloneq L_{\mathrm{net}}\,L_{\mathrm{emb}}$.
Equation \ref{eq:a22} is the \emph{global} for if \eqref{eq:a3}.
\begin{equation}
\|z_s^{(i)}-z_{s-1}^{(i)}\|_2 \;\le\; L_{\mathrm{all}}\sum_{j}\sqrt{D_{s-1}^{(j)}}.
\tag{A.3.23}\label{eq:a23}
\end{equation}
Late in iterative sampling the re-masking rate decreases and many positions become stable.
Empirically one can upper bound the tail contribution at step $s-1$ by
\begin{equation}
\sum_{j\neq i}\sqrt{D_{s-1}^{(j)}} \;\le\; \kappa_{s-1}^{(i)}\,\sqrt{D_{s-1}^{(i)}},
\qquad \kappa_{s-1}^{(i)}\in[0,\infty),
\tag{A.3.24}\label{eq:a24}
\end{equation}
where $\kappa_{s-1}^{(i)}$ is an observable ratio.
Substituting \eqref{eq:a24} into \eqref{eq:a23},
\begin{equation}
\|z_s^{(i)}-z_{s-1}^{(i)}\|_2
\;\le\; L_{\mathrm{all}}\,(1+\kappa_{s-1}^{(i)})\,\sqrt{D_{s-1}^{(i)}}
\;=\; \underbrace{L_{\mathrm{all}}(1+\kappa_{s-1}^{(i)})}_{\eqqcolon\,L}\,\sqrt{D_{s-1}^{(i)}}.
\tag{A.3.25}\label{eq:a25}
\end{equation}
This is exactly Assumption~A3 in the main text.

\paragraph{Constants at a glance.}
\[
L \;=\; \underbrace{\|E\|_2\sqrt{2}}_{L_{\mathrm{emb}}}
\times
\underbrace{\|W_o\|_2\,\bigl(1 + A_{\mathrm{ff}}\,A_{\mathrm{mha}}\,L_{\mathrm{ln}}\bigr)^{H}}_{L_{\mathrm{net}}}
\times
(1+\kappa_{s-1}^{(i)}),
\quad
A_{\mathrm{mha}}\le \|W_O\|_2\max_{\text{head}} A_{\mathrm{att}}^{\text{(head)}},
\]
with $A_{\mathrm{att}}^{\text{(head)}}$ given by \eqref{eq:a13} and $A_{\mathrm{ff}}=\|W_2\|_2 L_{\mathrm{act}}\|W_1\|_2$.

{

\section{Semantic Stability on KL-based Locking}
\label{appendix:semantic-stability}

Our locking rule is driven by distributional stability; we lock a token position when the KL divergence between successive posteriors at that position becomes small and remains small across steps. Conceptually, however, low KL between successive posteriors does not strictly guarantee that the semantics at that position are fully determined. In principle, one can imagine a position whose distribution remains almost unchanged across steps while probability mass oscillates between two near-synonymous tokens (e.g., ``happy'' and ``joyful''), which would still yield a small step-wise KL.

We regard such cases as essentially benign. Choosing one of several near-synonymous options has little impact on the overall semantic content of the sentence; in this scenario, locking effectively commits to one paraphrase among a small set of semantically equivalent alternatives. 

More concerning would be different cases when the model is still undecided between \emph{semantically distant} options, such as ``he'' vs.\ ``she'', ``yes'' vs.\ ``no'', or ``positive'' vs.\ ``negative''. In that case, prematurely locking could have a much larger impact on the downstream continuation.

In practice, however, the combination of the masked diffusion sampler and our locking policy makes this latter situation rare. 
First, standard masked diffusion schedulers unmask \texttt{[MASK]} positions in order of confidence: at each step, they select a fixed (or adaptive) number of positions with the highest posterior probability and convert them from \texttt{[MASK]} to concrete tokens. Positions where the model is genuinely uncertain between semantically distant options typically have relatively flat posteriors and are therefore unmasked in later steps. Second, SureLock only applies the KL-based locking rule to positions that have already been unmasked. Thus, by the time a position is both unmasked and classified as ``stable'' by our criterion, the model usually assigns high probability to a narrow set of semantically consistent tokens, and any residual uncertainty is mostly between close paraphrases rather than contradictory alternatives.

Empirically, this intuition is supported by our benchmarks. Across a wide range of configurations in Tables~\ref{tab:quality-flops-Base},~\ref{tab:quality-flops-Instruct}, and~\ref{tab:quality-flops-Instruct-humaneval} (varying tasks, generation length, number of diffusion steps, and other hyperparameters), enabling locking does not lead to systematic or severe degradation in either language modeling metrics or instruction-following performance relative to the unlocked baseline. 
That means the differences we do observe are typically localized, 
i.e., slightly different word choices or phrasing around a locked position rather than global breakdowns of topic, discourse structure, or adherence to the input instruction. 
Appendix~\ref{appendix:example} provides side-by-side examples of baseline vs.\ SureLock outputs under our default settings.
}

{
\section{Interaction between sampling temperature and SureLock}
\label{appendix:temperature}
In our experiments,
we fixed the sampling temperature $\tau$ to $0$ 
to isolate the effect of SureLock.
Importantly, however, our locking mechanism itself does not rely on a particular sampling temperature.

The locking criterion (Sec.~\ref{sec:methods:lock-criterion}) is computed on the raw posterior before temperature scaling, 
i.e., using the distribution obtained from the raw logits (equivalently, a $\tau = 1$ softmax). 
The sampling temperature $\tau$, when non-zero, is applied only in the categorical sampling step that materializes discrete tokens, not in the distributions used for the KL-based stability test. 
Consequently, for a fixed threshold $\varepsilon$, the set of positions judged “stable’’ by SureLock is independent of $\tau$: increasing $\tau$ changes the diversity of sampled outputs, but does not by itself delay or accelerate locking timing.

One could alternatively apply the same temperature transform to the posteriors before computing KL, i.e., base the stability test on $p_t^{\tau}$ and $p_{t-1}^{\tau}$ rather than $p_t$ and $p_{t-1}$. 
Mathematically, this defines a temperature-dependent divergence that changes which parts of the distribution differences are emphasized, e.g., flattening posteriors makes KL smaller and relaxes the notion of ``stability''. 

In practice, however, this mainly reparameterizes the sensitivity of the locking test and plays a quite similar role to adjusting the threshold $\varepsilon$ itself. 
Therefore, for clarity and simplicity, we keep the locking criteria temperature-agnostic and let $\varepsilon$ control how tolerant SureLock is to distributional changes, while temperature controls decoding diversity. 
}

\section{Dataset Settings for Experiment-1}\label{appendix:exp1-preprocess}
To efficiently sweep a large configuration space, 
in experiment-1 to profile algorithm FLOPs, 
we used a focused subset of MT-Bench rather than the full set. 
For each of the eight categories e.g., \textit{``writing''} and \textit{``coding,''} 
we use the first four single-turn prompts---32 prompts in total---so that diverse phenomena may appear. 
Prompts are used unedited and are preprocessed with the \textsc{LLaDA}-8B-Instruct tokenizer using its default chat template; no system message is injected.

\section{Additional Results for Experiment-2}\label{appendix:exp2-additional}
We show additional results for different settings of $m$-percentile used for the optional confidence gate~\ref{sec:methods:lock-criterion}; we set $m = \{10\%, 40\%\}$, which differ from the one in the main text i.e., $m=20\%$.
The trend in temporal dynamics of FLOPs ratio does not change significantly with $m$, suggesting $m$ is not a particularly sensitive parameter.

\begin{figure}[h]
    \centering
    \includegraphics[width=.8\linewidth]{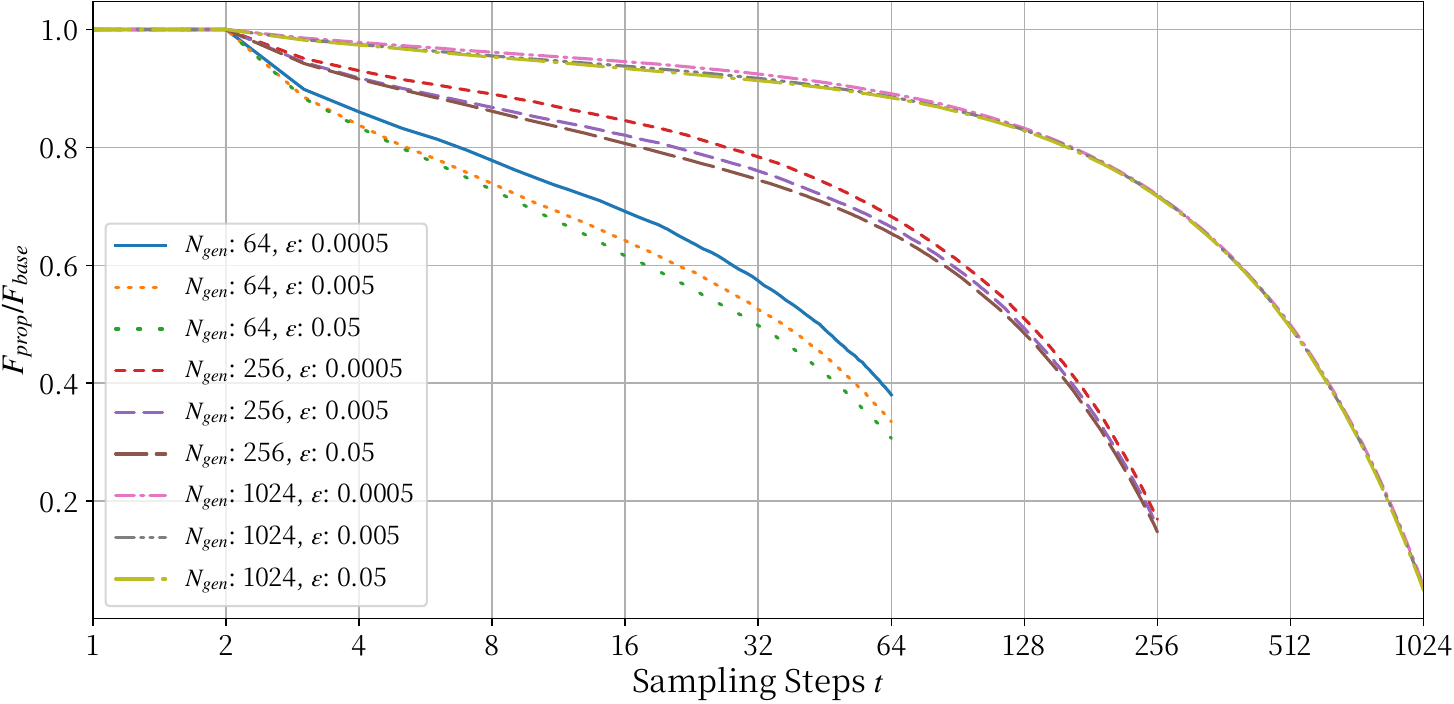}
    \caption{\textbf{Step-wise FLOPs ratio.}
    Algorithmic FLOPs ratio at step $t$ i.e., $\mathcal F^{t}_{\text{prop}}/\mathcal F^{t}_{\text{base}}$  
    (and the averaged per-step active row ratio (micro) 
    $\bar{r}_t = {\sum_b{{M_{t,b}}}}/{{\sum_b{{B\,N_{b}}}}}$ 
    consistently decreases as steps proceed, explaining later-step savings of computational cost. $m$-percentile is $10\%$.}
    \label{fig:step_ratio-m10}
\end{figure}

\begin{figure}[h]
    \centering
    \includegraphics[width=.8\linewidth]{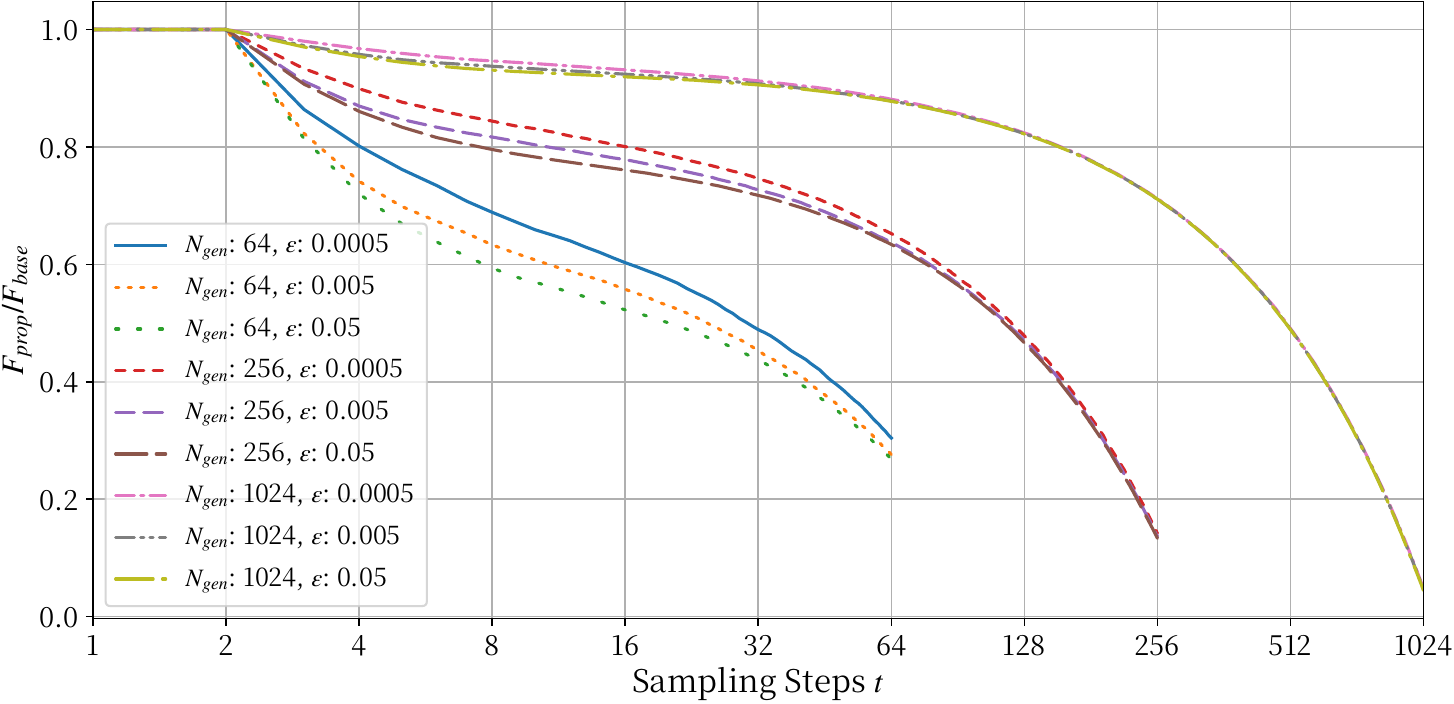}
    \caption{\textbf{Step-wise FLOPs ratio.}
    Algorithmic FLOPs ratio at step $t$ i.e., $\mathcal F^{t}_{\text{prop}}/\mathcal F^{t}_{\text{base}}$
    (and the averaged per-step active row ratio (micro) 
    $\bar{r}_t = {\sum_b{{M_{t,b}}}}/{{\sum_b{{B\,N_{b}}}}}$ 
    consistently decreases as steps proceed, explaining later-step savings of computational cost. $m$-percentile is $40\%$.}
    \label{fig:step_ratio-m40}
\end{figure}

\section{Detailed Settings for Experiment-3}
\label{appendix:exp3-setting}
\subsection{On WikiText-103}
We extracted a small subset from Wikitext-103~\citep{merity2016pointer}, which contains diverse text, for the continuation generation task. 
First, we removed blank lines from the dataset loaded from \url{https://huggingface.co/datasets/Salesforce/wikitext}, 
then extracted the first 120 records at odd indices. 
Next, we extracted the first 64 tokens from each record to form the prompt $x$ for the continuation. 
We here used LLaDA-8B-Base tokenizer~\citep{nie2025largelanguagediffusionmodels}.

For each prompt $x$, we generate a sequence $y$ of fixed length $N_g$, then compute Gen.-PPL using an external autoregressive language model $\theta$. 
This work uses Llama-3-8B~\citep{grattafiori2024llama}, a representative autoregressive language model.
We define Gen.-PPL as perplexity micro-averaged over the dataset;
we first compute the micro-averaged negative log-likelihood
$\overline{\mathrm{NLL}}_{\text{micro}}=  {\sum_i \mathrm{NLL}_i}/{\sum_i |y_i|}$,
then 
$\mathrm{PPL}_{\text{micro}}= \exp\!\left(\overline{\mathrm{NLL}}_{\text{micro}}\right)$.
In the report, we used natural logarithms.

\subsection{On MT-Bench}
We evaluated LLaDA-8B-Instruct~\citep{nie2025largelanguagediffusionmodels} using all 80 records in MT-bench~\cite{zheng2023judging} with a single-turn configuration---we used the first prompt for each record.
Essentially, using the LLaDA-8B-Instruct tokenizer, we applied its default chat template to each prompt to form input.
We then fed these prompts to the model to generate a responses.

Response evaluation was conducted using the LLM-as-a-judge format. 
Specifically, we used the official MT-Bench evaluation repository
\url{https://github.com/lm-sys/FastChat/tree/main/fastchat/llm_judge} as-is. 
As the judge model, we employed OpenAI's gpt-4o.

{

\section{Challenging Set for Experiment-3}\label{appendix:humaneval}
We conduct additional experiments for Experiment-3 as a challenging set. 
We use code generation, where it is examined by running generated code in a sandbox environment to report pass rate (Pass@1).
Therefore, it makes it a stringent test of whether premature locking induces harmful failures. 
We evaluate HumanEval's first 70 examples~\citep{chen2021codex} under a representative configuration, i.e., LLaDA-8B-Instruct, with $N_\text{gen}=S=256$, as used in Experiment-3. 
Under the setting, the baseline achieves a pass@1 of 18.6, while SureLock achieves 18.6-20.3 with about 0.5× efficiency in algorithmic FLOPs. 
We can see that SureLock operates even in code generation benchmarks where finite surface fluctuations directly impact performance, 
reducing computational cost while maintaining original performance.
}

\begin{table}[h]
\centering
\begin{tabular}{c@{\,\,}c@{\,\,}c@{\,\,\,\,}|c@{\,\,}c@{\,\,\,\,}|c@{\,\,\,\,\,\,}c}
\toprule
$N_{\text{gen}}$ & steps & $\varepsilon$ & $\uparrow$ Pass@1$_{base}$ & $\downarrow \mathcal F_{base}$ & $\uparrow$ Pass@1$_{prop}$ & $ \downarrow \mathcal F_{prop}$ \\
\midrule
256&256&5e-4 &  18.6 & {3.648e+12} &  20.3 (1.09$\times$) & {1.716e+12}   (0.470$\times$) \\
256&256&5e-3 &  18.6 & {3.648e+12} &  18.6 (1.00$\times$) & {1.676e+12}   (0.459$\times$) \\
\bottomrule
\end{tabular}
\caption{Quality changes in generated responses by LLaDA-Instruct. Pass@1$_{base}$ and Pass@1$_{prop}$ indicate the pass rate (\%) of the test cases for HumanEval.}
\label{tab:quality-flops-Instruct-humaneval}
\end{table}

\section{Runtime Metrics}\label{appendix:runtime-metric}
\emph{E2E-TPS} measures the sustained decoding throughput of the model across multiple batches, including all inter-batch gaps and host/device launch overheads incurred during decoding.
Concretely, after preparing batches we synchronize the device and start a single wall-clock timer immediately before the first decoding call, and stop it after the last decoding finishes.
The numerator is the total number of unmasked tokens produced by the iterative sampling process.
\[
\text{E2E-TPS} = \frac{{\text{\#\{total unmasked tokens}\}}}{\text{wall-clock time}}
\]
\emph{Step TPS} at sampling step $t$ is computed as the number of newly unmasked tokens at that step divided by the wall-clock time, aggregating across batches:
\[
\text{StepTPS}(t) = \frac{\#\{\text{newly unmasked tokens at step}~t\}}{\Delta{t}}
\]
where $\Delta{t}$ is the wall-clock time for that step $t$. 
In both baseline and \textsc{SureLock}, timing uses CUDA events with synchronization.

\section{Algorithmic Characteristics on Runtime}\label{appendix:runtime:discussion}
{While our method yields substantial reductions in algorithmic FLOPs, the corresponding wall-clock speedups are more modest in some settings. 
This gap arises from several implementation-level overheads. 

Reusing locked tokens requires irregular reads and writes to the computed kernels (e.g., Q/K/V) and caches using non-contiguous indices (Algorithm 1; Line10-13), causing scattered, poorly coalesced memory accesses and making the system memory-bound. Even when many tokens are skipped, a large amount of K/V values must still be transferred between the cache and the compute kernels (Algorithm 1; Line 11), so cache traffic can dominate latency in non-compute-bound regimes. 
In addition, each diffusion step incurs a fixed cost for managing active and frozen tokens, such as computing posterior KL values, applying thresholds, constructing index sets, and updating the cache (Algorithm 1, Lines 17-25), which becomes non-negligible once the per-step arithmetic workload is small. 
Finally, when only a small subset of tokens remains active, generic dense attention and FFN kernels operate on thin slices of the sequence dimension (Algorithm 1, Lines 10 and 12), resulting in low thread/warp occupancy and underutilization of the hardware.

Our current implementations are based on standard PyTorch operators and off-the-shelf dense kernels on commodity GPUs, without the use of custom kernels, kernel fusion, or specialized cache layouts. Consequently, the reported wall-clock improvements should be viewed as conservative.

In the future, we believe it will be possible to further bridge the gap between computational complexity reduction and real-time speed improvements by introducing hardware-specific optimizations that are orthogonal to our algorithmic contribution. 

In particular, one could exploit (i) cache layout redesign and periodic compaction so that active tokens are packed into (block-)contiguous regions to restore coalesced loads and stores from the K/V caches; 
(ii) dedicated fused kernels that, given the current active position set, jointly perform index gathering, attention and FFN computation, and writeback into the compact cache within a single launch, thereby reducing kernel-launch overhead and intermediate global-memory traffic; 
and (iii) overlap of cache operations with compute by issuing asynchronous copies and prefetches of cached K/V segments for the next step while computing attentions and FFN for the current step, effectively hiding much of the cache-access latency.
}

\section{Optional Unlocking}\label{appendix:unlock}
In \textsc{SureLock}, once locked, a token position is \emph{excluded from re-masking by construction}~(Sec.~\ref{sec:methods}).
However, rare context shifts can render a cached posterior stale.
Therefore, allowing lightweight \emph{unlocking} to detect context drift within a small finite budget and returning tokens to the active set could also be considered as a useful option.
In the following, we discuss an implementation for that option.

After every $P$ steps, we evaluate a budgeted proxy on locked rows only (e.g., variable-length attention with a reduced subgraph).
This yields proxy uncertainty $\tilde u_t^{(i)}$ and distributional drift from the locked time $\widetilde D_t^{(i)}=\mathrm{KL}(\hat p_t^{(i)}\|p_{t_i^\star}^{(i)})$.
Let $\theta_t \!=\! q_m(u_t)$ be the top-$m\%$ uncertainty among \emph{active} tokens at step $t$.
We \emph{unlock} a locked token $i$ if any holds:
\begin{equation}
\label{eq:unlock}
\tilde u_t^{(i)} > \theta_t
\;\;\; \wedge\ \;\;\;
\widetilde D_t^{(i)} > \varepsilon_{\text{unlock}}
\;\;\;\ \wedge\ \;\;\;
t - t_i^\star > D_{\text{interval}}.
\end{equation}
$D_{\text{interval}}$ is set for suppressing failure to converge due to excessive oscillation in the locking and unlocking behavior.
Furthermore, after an unlock, position $i$ cannot be re-locked for $D_{\text{cool}}$ steps and must satisfy a stricter KL threshold $\rho\varepsilon$ ($\rho{\;\le\;}1$) when re-locking.

On unlock at step $t$, we invalidate cached $K/V$ for $i$ and reintroduce its Query projection and FFN sublayers from step $t{+}1$; the current step is unchanged.
Amortized probe overhead on attention-dominant FLOPs is a trade-off with performance. However, the subgraph used for probing can be reduced to a size negligible compared to the dominant cost, depending on its size.

Let $\Delta_t^{(i)}=\|\log p_t^{(i)}-\log p_t^{(i),\text{full}}\|_2$ be deviation from a no-lock decode. Unlocking cannot increase $\Delta_{t'}^{(i)}$ for $t'\!\ge\!t$ because the row is computed by the same subgraph as baseline thereafter.
Therefore, the error bound~(Theorem~\ref{thm:lock} in Sec.~\ref{sec:methods:error}) holds up to the unlock time;  after that, deviation is non-increasing.

\section{Decoded Examples}\label{appendix:example}
We provide additional response examples for MT-Bench. 
We randomly selected one question per category.

\begin{figure*}[h]
\footnotesize
\centering
\begin{tcolorbox}[colback=gray!15, colframe=black!20, 
                  width=\textwidth, boxrule=0pt, arc=2pt,
                  left=6pt, right=6pt, top=6pt, bottom=6pt]
\textbf{Question:} Compose an engaging travel blog post about a recent trip to Hawaii, highlighting cultural experiences and must-see attractions.
\end{tcolorbox}
\vspace{0.2em}
\begin{minipage}[t]{0.47\textwidth}
\begin{tcolorbox}[colback=gray!10, colframe=black!15, 
                  width=\linewidth, boxrule=0pt, arc=2pt,
                  left=5pt, right=5pt, top=6pt, bottom=6pt]
\textbf{Baseline} (score: 1)\\
Title: "Unveiling the Paradise: A Cultural Journey Through Hawaii"

Introduction:
\end{tcolorbox}
\end{minipage}\hfill
\begin{minipage}[t]{0.47\textwidth}
\begin{tcolorbox}[colback=gray!10, colframe=black!15, 
                  width=\linewidth, boxrule=0pt, arc=2pt,
                  left=5pt, right=5pt, top=6pt, bottom=6pt]
\textbf{\textsc{SureLock}} (score: 3)\\
Title: "A Journey Through Paradise: Unveiling the Hidden Gems of Hawaii"

Introduction:
As the sun set over the lush landscapes of Hawaii, I felt
\end{tcolorbox}
\end{minipage}
\caption{Quality comparison of responses between Baseline vs. \textsc{SureLock} response. The example question uses the record from MT-bench with question id$=81$. Responses are from LLaDA-8B-Instrut with KL threshold $\varepsilon=5e-4$, ${N_{\text{gen}}}=128$, $S=128$.}
\label{fig:case-study-81}
\end{figure*}

\begin{figure*}[h]
\footnotesize
\centering
\begin{tcolorbox}[colback=gray!15, colframe=black!20, 
                  width=\textwidth, boxrule=0pt, arc=2pt,
                  left=6pt, right=6pt, top=6pt, bottom=6pt]
\textbf{Question:} Embrace the role of Sheldon from "The Big Bang Theory" as we delve into our conversation. Don’t start with phrases like "As Sheldon". Let's kick things off with the following question: "What is your opinion on hand dryers?"
\end{tcolorbox}
\vspace{0.2em}
\begin{minipage}[t]{0.47\textwidth}
\begin{tcolorbox}[colback=gray!10, colframe=black!15, 
                  width=\linewidth, boxrule=0pt, arc=2pt,
                  left=5pt, right=5pt, top=6pt, bottom=6pt]
\textbf{Baseline} (score: 2)\\
Hand dryers are useless.
\end{tcolorbox}
\end{minipage}\hfill
\begin{minipage}[t]{0.47\textwidth}
\begin{tcolorbox}[colback=gray!10, colframe=black!15, 
                  width=\linewidth, boxrule=0pt, arc=2pt,
                  left=5pt, right=5pt, top=6pt, bottom=6pt]
\textbf{\textsc{SureLock}} (score: 1)\\
Hand dryers.
\end{tcolorbox}
\end{minipage}
\caption{Quality comparison of responses between Baseline vs. \textsc{SureLock} response. The example question uses the record from MT-bench with question id$=92$. Responses are from LLaDA-8B-Instrut with KL threshold $\varepsilon=5e-4$, ${N_{\text{gen}}}=128$, $S=128$.}
\label{fig:case-study-92}
\end{figure*}

\begin{figure*}[h]
\footnotesize
\centering
\begin{tcolorbox}[colback=gray!15, colframe=black!20, 
                  width=\textwidth, boxrule=0pt, arc=2pt,
                  left=6pt, right=6pt, top=6pt, bottom=6pt]
\textbf{Question:} Thomas is very healthy, but he has to go to the hospital every day. What could be the reasons?
\end{tcolorbox}
\vspace{0.2em}
\begin{minipage}[t]{0.47\textwidth}
\begin{tcolorbox}[colback=gray!10, colframe=black!15, 
                  width=\linewidth, boxrule=0pt, arc=2pt,
                  left=5pt, right=5pt, top=6pt, bottom=6pt]
\textbf{Baseline} (score: 3)\\
here could be several reasons why Thomas has to go to the hospital every day, despite being very healthy. Some possible reasons include:

1. Chronic illness: Thomas may have a chronic illness that requires daily medical attention, such as diabetes, cancer, or a heart condition.

2. Injury recovery: Thomas may be recovering from a serious injury or surgery that requires daily monitoring and care.

3. Medical treatment: Thomas may be undergoing a medical treatment that requires daily visits to the hospital, such as chemotherapy or radiation therapy.

4. Medical procedure: Thomas may be undergoing a medical procedure that requires daily visits to

\end{tcolorbox}
\end{minipage}\hfill
\begin{minipage}[t]{0.47\textwidth}
\begin{tcolorbox}[colback=gray!10, colframe=black!15, 
                  width=\linewidth, boxrule=0pt, arc=2pt,
                  left=5pt, right=5pt, top=6pt, bottom=6pt]
\textbf{\textsc{SureLock}} (score: 4)\\
There could be several reasons why Thomas has to go to the hospital every day despite being healthy. Some possible reasons include:

1. Chronic illness: Thomas might have a chronic illness that requires daily medical attention.
2. Medical treatment: He could be undergoing a medical treatment that requires daily visits to the hospital.
3. Recovery period: Thomas might be recovering from an injury or surgery that requires daily monitoring and care.
4. Medical condition: He could be managing a medical condition that requires daily monitoring and treatment.
5. Medical procedure: Thomas might be undergoing a medical procedure that requires daily visits to the hospital.
\end{tcolorbox}
\end{minipage}
\caption{Quality comparison of responses between Baseline vs. \textsc{SureLock} response. The example question uses the record from MT-bench with question id$=103$. Responses are from LLaDA-8B-Instrut with KL threshold $\varepsilon=5e-4$, ${N_{\text{gen}}}=128$, $S=128$.}
\label{fig:case-study-103}
\end{figure*}

\begin{figure*}[h]
\footnotesize
\centering
\begin{tcolorbox}[colback=gray!15, colframe=black!20, 
                  width=\textwidth, boxrule=0pt, arc=2pt,
                  left=6pt, right=6pt, top=6pt, bottom=6pt]
\textbf{Question:} When rolling two dice, what is the probability that you roll a total number that is at least 3?
\end{tcolorbox}
\vspace{0.2em}
\begin{minipage}[t]{0.47\textwidth}
\begin{tcolorbox}[colback=gray!10, colframe=black!15, 
                  width=\linewidth, boxrule=0pt, arc=2pt,
                  left=5pt, right=5pt, top=6pt, bottom=6pt]
\textbf{Baseline} (score: 2)\\
There are $6 \times 6 = 36$ possible outcomes when rolling two dice. We need to count the number of outcomes where the sum of the dice is at least 3.

The possible sums are 3, 4, 5, and 6.

- Sum of 3: 2 outcomes (1,2), (2,1)
- Sum of 4: 3 outcomes (1,3), (2,2), (3,1)
- Sum of 5: 4 outcomes (1,4), (2,3), (3,
\end{tcolorbox}
\end{minipage}\hfill
\begin{minipage}[t]{0.47\textwidth}
\begin{tcolorbox}[colback=gray!10, colframe=black!15, 
                  width=\linewidth, boxrule=0pt, arc=2pt,
                  left=5pt, right=5pt, top=6pt, bottom=6pt]
\textbf{\textsc{SureLock}} (score: 3)\\
There are $6 \times 6 = 36$ possible outcomes when rolling two dice. We will count the number of outcomes where the sum is at least 3.

- Sum = 3: (1,2), (2,1)
- Sum = 4: (1,3), (2,2), (3,1)
- Sum = 5: (1,4), (2,3), (3,2), (4,1)
- Sum = 6: (1,5), (2,4), (3,3), (4,
\end{tcolorbox}
\end{minipage}
\caption{Quality comparison of responses between Baseline vs. \textsc{SureLock} response. The example question uses the record from MT-bench with question id$=114$. Responses are from LLaDA-8B-Instrut with KL threshold $\varepsilon=5e-4$, ${N_{\text{gen}}}=128$, $S=128$.}
\label{fig:case-study-114}
\end{figure*}

\begin{figure*}[h]
\footnotesize
\centering
\begin{tcolorbox}[colback=gray!15, colframe=black!20, 
                  width=\textwidth, boxrule=0pt, arc=2pt,
                  left=6pt, right=6pt, top=6pt, bottom=6pt]
\textbf{Question:} Write a function to find the highest common ancestor (not LCA) of two nodes in a binary tree.
\end{tcolorbox}
\vspace{0.2em}
\begin{minipage}[t]{0.47\textwidth}
\begin{tcolorbox}[colback=gray!10, colframe=black!15, 
                  width=\linewidth, boxrule=0pt, arc=2pt,
                  left=5pt, right=5pt, top=6pt, bottom=6pt]
\textbf{Baseline} (score: 3)\\
Here is a Python function to find the highest common ancestor (HCA) of two nodes in a binary tree:

```
class TreeNode:
    def \_\_init\_\_(self, val\=0, left=None, right=None):
        self.val = val
        self.left = left
        self.right = right

def find\_hca(root, p, q):
    if root is None:
        return None
    if root == p or root == q:
        return root
    left = find\_hca(root.left, p, q)
    right = find\_hca(root.right,
\end{tcolorbox}
\end{minipage}\hfill
\begin{minipage}[t]{0.47\textwidth}
\begin{tcolorbox}[colback=gray!10, colframe=black!15, 
                  width=\linewidth, boxrule=0pt, arc=2pt,
                  left=5pt, right=5pt, top=6pt, bottom=6pt]
\textbf{\textsc{SureLock}} (score: 3)\\
Here's a Python function to find the highest common ancestor (HCA) of two nodes in a binary tree:

```
class TreeNode:
    def \_\_init\_\_(self, val=0, left=None, right=None):
        self.val = val
        self.left = left
        self.right = right

def find\_hca(root, p, q):
    if root is None:
        return None
    if root == p or root == q:
        return root
    left = find\_hca(root.left, p, q)
    right = find\_hca(root.right,
\end{tcolorbox}
\end{minipage}
\caption{Quality comparison of responses between Baseline vs. \textsc{SureLock} response. The example question uses the record from MT-bench with question id$=125$. Responses are from LLaDA-8B-Instrut with KL threshold $\varepsilon=5e-4$, ${N_{\text{gen}}}=128$, $S=128$.}
\label{fig:case-study-125}
\end{figure*}

\begin{figure*}[h]
\footnotesize
\centering
\begin{tcolorbox}[colback=gray!15, colframe=black!20, 
                  width=\textwidth, boxrule=0pt, arc=2pt,
                  left=6pt, right=6pt, top=6pt, bottom=6pt]
\textbf{Question:} Please read the paragraph below and count how many times the words "Amazon", "river", and "you" appear. Please present the results in the format of "word, number of appearances" with each word on a separate line. Sort the lines in order of the number of appearances.
The Amazon, a mesmerizing expanse of nature's wonders, is home to the legendary Amazon River. Flowing through awe-inspiring landscapes like the Amazon rainforest, the river weaves its way through Brazil, Colombia, and Peru, giving life to countless creatures. From the mighty jaguars prowling the Amazon jungle to the vibrant macaws soaring above the canopy, this remarkable region teems with biodiversity. Deep within the river's currents, magnificent pink river dolphins gracefully glide alongside piranhas and electric eels. Along the riverbanks, you'll find bustling cities like Manaus, where the urban meets the wild, and Iquitos, a gateway to the heart of the Amazon rainforest. As you venture further, the Amazon River reveals hidden gems like the captivating Anavilhanas Archipelago, a mosaic of islands brimming with rare species. Embark on an adventure, explore the enchanting Amazon River, and immerse yourself in a world teeming with life and untamed beauty.
\end{tcolorbox}
\vspace{0.2em}
\begin{minipage}[t]{0.47\textwidth}
\begin{tcolorbox}[colback=gray!10, colframe=black!15, 
                  width=\linewidth, boxrule=0pt, arc=2pt,
                  left=5pt, right=5pt, top=6pt, bottom=6pt]
\textbf{Baseline} (score: 3)\\
Amazon, 2
river, 2
you, 2
\end{tcolorbox}
\end{minipage}\hfill
\begin{minipage}[t]{0.47\textwidth}
\begin{tcolorbox}[colback=gray!10, colframe=black!15, 
                  width=\linewidth, boxrule=0pt, arc=2pt,
                  left=5pt, right=5pt, top=6pt, bottom=6pt]
\textbf{\textsc{SureLock}} (score: 3)\\
Amazon, 2
river, 2
you, 2
\end{tcolorbox}
\end{minipage}
\caption{Quality comparison of responses between Baseline vs. \textsc{SureLock} response. The example question uses the record from MT-bench with question id$=136$. Responses are from LLaDA-8B-Instrut with KL threshold $\varepsilon=5e-4$, ${N_{\text{gen}}}=128$, $S=128$.}
\label{fig:case-study-136}
\end{figure*}

\begin{figure*}[h]
\footnotesize
\centering
\begin{tcolorbox}[colback=gray!15, colframe=black!20, 
                  width=\textwidth, boxrule=0pt, arc=2pt,
                  left=6pt, right=6pt, top=6pt, bottom=6pt]
\textbf{Question:} The city of Vega intends to build a bridge that will span the Vegona River, covering a distance of 1.8 kilometers. The proposed location falls within a seismically active area that has experienced several high-magnitude earthquakes. Given these circumstances, what would be the best approach to constructing the bridge?
\end{tcolorbox}
\vspace{0.2em}
\begin{minipage}[t]{0.47\textwidth}
\begin{tcolorbox}[colback=gray!10, colframe=black!15, 
                  width=\linewidth, boxrule=0pt, arc=2pt,
                  left=5pt, right=5pt, top=6pt, bottom=6pt]
\textbf{Baseline} (score: 7)\\
Given the seismically active nature of the area, the best approach to constructing the bridge would be to use a flexible and resilient design that can withstand the forces of earthquakes. This could involve using a combination of materials such as steel, concrete, and timber, as well as incorporating features such as flexible joints and base isolators to allow the bridge to move in response to seismic activity. Additionally, the bridge should be designed to be lightweight and compact to minimize the risk of damage during an earthquake. Finally, the bridge should be designed and constructed according to seismic codes and regulations to ensure its safety and stability in the event of earthquakes.
\end{tcolorbox}
\end{minipage}\hfill
\begin{minipage}[t]{0.47\textwidth}
\begin{tcolorbox}[colback=gray!10, colframe=black!15, 
                  width=\linewidth, boxrule=0pt, arc=2pt,
                  left=5pt, right=5pt, top=6pt, bottom=6pt]
\textbf{\textsc{SureLock}} (score: 7)\\
Given the seismically active nature of the area, the best approach to constructing the bridge would be to use a flexible and resilient design that can withstand the effects of earthquakes. This could include using a combination of concrete and steel, as well as incorporating seismic isolation systems such as base isolation or rubber bearings. Additionally, the bridge should be designed to be flexible and able to absorb and dissipate energy from seismic waves, rather than trying to resist them. It is also important to conduct thorough research and testing to evaluate the bridge's performance under various earthquake scenarios to ensure that the bridge remains safe and stable in the face of seismic activity.
\end{tcolorbox}
\end{minipage}
\caption{Quality comparison of responses between Baseline vs. \textsc{SureLock} response. The example question uses the record from MT-bench with question id$=147$. Responses are from LLaDA-8B-Instrut with KL threshold $\varepsilon=5e-4$, ${N_{\text{gen}}}=128$, $S=128$.}
\label{fig:case-study-147}
\end{figure*}

\begin{figure*}[h]
\footnotesize
\centering
\begin{tcolorbox}[colback=gray!15, colframe=black!20, 
                  width=\textwidth, boxrule=0pt, arc=2pt,
                  left=6pt, right=6pt, top=6pt, bottom=6pt]
\textbf{Question:} Which methods did Socrates employ to challenge the prevailing thoughts of his time?
\end{tcolorbox}
\vspace{0.2em}
\begin{minipage}[t]{0.47\textwidth}
\begin{tcolorbox}[colback=gray!10, colframe=black!15, 
                  width=\linewidth, boxrule=0pt, arc=2pt,
                  left=5pt, right=5pt, top=6pt, bottom=6pt]
\textbf{Baseline} (score: 6)\\
Socrates employed several methods to challenge the prevailing thoughts of his time. One of the most famous methods was the Socratic method, which involved asking a series of questions to uncover the underlying assumptions and beliefs of his listeners. This method encouraged his listeners to think critically and question their own beliefs.

Another method Socrates used was the concept of the "good life," which he believed was the only way to achieve true happiness. He argued that the good life involved living in accordance with reason and virtue, rather than pursuing material wealth or power.

Finally, Socrates used the concept of the "just city" to challenge the
\end{tcolorbox}
\end{minipage}\hfill
\begin{minipage}[t]{0.47\textwidth}
\begin{tcolorbox}[colback=gray!10, colframe=black!15, 
                  width=\linewidth, boxrule=0pt, arc=2pt,
                  left=5pt, right=5pt, top=6pt, bottom=6pt]
\textbf{\textsc{SureLock}} (score: 6)\\
Socrates employed several methods to challenge the prevailing thoughts of his time. One of the most famous methods was the Socratic method, which involved asking a series of questions to encourage someone to think critically and examine their own beliefs. This method was used to uncover the underlying assumptions and beliefs of those around him.

Another method Socrates used was the use of irony. He would often use irony to highlight the absurdity of certain beliefs or ideas, and to encourage others to question their own assumptions.

Finally, Socrates also used the power of dialogue to challenge the prevailing beliefs of his time. He would often engage in debates
\end{tcolorbox}
\end{minipage}
\caption{Quality comparison of responses between Baseline vs. \textsc{SureLock} response. The example question uses the record from MT-bench with question id$=158$. Responses are from LLaDA-8B-Instrut with KL threshold $\varepsilon=5e-4$, ${N_{\text{gen}}}=128$, $S=128$.}
\label{fig:case-study-158}
\end{figure*}

\end{document}